\theoremstyle{definition}
\newtheorem{mydef}{Definition}
\newtheorem{thm}[mydef]{Theorem}
\newtheorem{lem}[mydef]{Lemma}
\newtheorem{prop}[mydef]{Proposition}
\newtheorem{cor}[mydef]{Corollary}
\newcommand\R{\mathbb{R}}
\newcommand\N{\mathbb{N}}
\newcommand\B{\mathbb{B}}
\newcommand\E{\mathbb{E}}
\newcommand\eps{\epsilon}
\newcommand\vareps{\varepsilon}
\newcommand\argmin{\text{argmin}}
\newcommand\Sf{S_{\text{Fast}}}
\newcommand\Ktc{Kt\text{-cost}}
\newcommand{\oset}[3][0ex]{%
  \mathrel{\mathop{#3}\limits^{
    \vbox to#1{\kern-2\ex@
    \hbox{$\scriptstyle#2$}\vss}}}}
\def\timesleq{\oset[.2ex]{\times}{\leq}} 
\def\timesgeq{\oset[.2ex]{\times}{\geq}} 
\def\timeseq{\oset[.3ex]{\times}{=}}     
\title{Loss Bounds and Time Complexity for Speed Priors}
\author{Daniel Filan, Marcus Hutter, Jan Leike}
\date{}
\begin{document}

\bibliographystyle{plainnat}
\bibpunct{(}{)}{;}{a}{,}{,}

\maketitle

\begin{abstract}

  This paper establishes for the first time
the predictive performance of speed priors and their computational complexity.
A speed prior is essentially a probability distribution that puts low probability on strings that are not efficiently computable.
We propose a variant to the original speed prior~\citep{schmidhuber2002speed},
and show that our prior can predict sequences drawn from probability measures
that are estimable in polynomial time.
Our speed prior is computable in doubly-exponential time,
but not in polynomial time.
On a polynomial time computable sequence
our speed prior is computable in exponential time.
We show better upper complexity bounds for Schmidhuber's speed prior
under the same conditions,
and that it predicts deterministic sequences that are computable in polynomial time;
however, we also show that it is not computable in polynomial time,
and the question of its predictive properties for stochastic sequences remains open.
\end{abstract}
\tableofcontents

\section{Introduction}
\label{sec:introduction}

We consider the general problem of sequence prediction, where a sequence of symbols $x_1, x_2, \dotsc, x_{t-1}$ is drawn from an unknown computable distribution $\mu$, and the task is to predict the next symbol $x_t$.
If $\mu$ belongs to some known countable class of distributions,
then a Bayesian mixture over the class leads to
good loss bounds:
the expected loss is at most $L + O(\sqrt{L})$
where $L$ is the loss of the informed predictor that knows $\mu$~\citep[Thm.\ 3.48]{marcus2005universal}.
These bounds are known to be tight.

Solomonoff's theory of inductive inference handles the most general case where all we know about $\mu$ is that it is computable~\citep{solomonoff1964formal1, solomonoff1978complexity}.
The Solomonoff prior $M$ assigns to a string
$x$ the probability that a universal Turing machine prints something
starting with $x$ when fed with fair coin flips.
Equivalently,
the distribution $M$ can be seen as a Bayesian mixture
that weighs each distribution according to their Kolmogorov complexity%
~\citep{wood2013non},
assigning higher a priori probability to simpler hypotheses%
~\citep{hutter2007universalbayes}.
However, $M$ is incomputable~\citep{LH15computability2},
which has thus far limited its application.

Schmidhuber has proposed a computable alternative to $M$ which discounts strings that are not efficiently computable~\citep{schmidhuber2002speed}. This distribution is called the \emph{speed prior} because asymptotically only the computationally fastest distributions that explain the data contribute to the mixture. However, no loss bounds for Schmidhuber's prior, which we write as $\Sf$, are known except in the case where the data are drawn from a prior like Schmidhuber's.

We introduce a prior $S_{Kt}$ that is related to both $\Sf$ and $M$,
and establish in Section~\ref{sec:speed-priors} that
it is also a speed prior in Schmidhuber's sense.
Our first main contribution is a bound on the loss incurred by a $S_{Kt}$-based predictor
when predicting strings drawn from a distribution that is computable in polynomial time.
This is proved in Section~\ref{sec:loss-bounds}.
The bounds we get are only a logarithmic factor worse than
the bounds for the Solomonoff predictor.
In particular, if the measure is deterministic and the loss function penalises errors, $S_{Kt}$-based prediction will only make a logarithmic number of errors.
Therefore, $S_{Kt}$ is able to effectively learn the generating distribution $\mu$. Our second main contribution is a proof that the same bound holds for the loss incurred by a $\Sf$-based predictor when computing a string deterministically generated in polynomial time, shown in the same section.

In Section~\ref{sec:time-complexity} we discuss
the time complexity of $S_{Kt}$ and $\Sf$.
We show that $S_{\text{Fast}}$ is computable in exponential time while $S_{Kt}$ is computable in doubly-exponential time, but not in polynomial time, limiting its practical applicability. However, we also show that if we are predicting a sequence that is computable in polynomial time, it only takes polynomial time to compute $\Sf$ and exponential time to compute $S_{Kt}$.

Although the results of this paper are theoretical and the algorithms im\-practical-seeming, related ideas from the field of algorithmic information theory have been approximated and put into practice. Examples include the Universal Similarity Metric's use in clustering~\citep{cilibrasi2005clustering}, Solomonoff-based reinforcement learning~\citep{veness2011monte}, and the Levin search-inspired Optimal Ordered Problem Solver~\citep{schmidhuber2004optimal}. However, using the theory to devise practical applications is a non-trivial task that we leave for future work.

\section{Preliminaries}
\label{sec:preliminaries}

\subsection{Setup and notation}
\label{sec:setup-notation}

Throughout this paper, we use monotone Turing machines with a binary alphabet $\B = \{0,1\}$, although all results generalise to arbitrary finite alphabets. A monotone machine is one with a unidirectional read-only input tape where the head can only move one way, a unidirectional write-only output tape where the head can only move one way, and some bidirectional work tapes. We say that a monotone machine $T$ computes string $x$ given program $p$ if the machine prints $x$ after reading all of $p$ but no more, and write $p \xrightarrow{T} x$~\citep[Def.\ 4.5.2]{livitanyi}. Some of these machines are universal Turing machines, or `UTM's. A UTM can simulate all other machines, so that the output of $U$ given input $I(T)p$ (where $I(T)$ is a prefix-free coding\footnote{A coding such that for no two different machines $T$ and $T'$ is $I(T)$ a prefix of $I(T')$.} of a Turing machine $T$) is the same as the output of $T$ given input $p$. Furthermore, we may assume this simulation occurs with only polynomial time overhead. In this paper, we fix a `reference' UTM $U$, and whenever a function $f(T,\dotsc)$ takes an argument $T$ that is a Turing machine, we will often write $f(\dotsc)$, where we set $T$ to be the reference UTM.

Our notation is fairly standard, with a few exceptions. If $p \xrightarrow{U} x$, then we simply write $p \rightarrow x$. We write $f(n) \timesleq g(n)$ if $f(n) = O(g(n))$, and $f(n) \timeseq g(n)$ if $f(n) \timesleq g(n)$ and $g(n) \timesleq f(n)$. Also, if $x$ is some string, we denote the length of $x$ by $|x|$. We write the set of finite binary strings as $\B^*$, the set of infinite binary sequences as $\B^\infty$, an element of $\B^\infty$ as $x_{1:\infty}$, the $n$\textsuperscript{th} symbol of a string $x$ or $x_{1:\infty}$ as $x_n$, and the first $n$ symbols of any string $x$ or $x_{1:\infty}$ as $x_{1:n}$. $\# A$ is the cardinality of set $A$. Finally, we write $x \sqsubseteq y$ if string $x$ is a prefix of string $y$, and $x \sqsubset y$ if $x$ is a proper prefix of $y$.

\subsection{$\Sf$ and $M$}
\label{sec:m-sf}

To define $\Sf$, we first need to define the \textsc{fast} algorithm (called \textsc{search} by \cite{livitanyi}, Ch. 7.5) after which it is named. This algorithm performs \textsc{phase} $i$ for each $i \in \N$, whereby $2^{i - |p|}$ instructions of all programs satisfying $|p| \leq i$ are executed as they would be on $U$, and the outputs are printed sequentially, separated by blanks. If string $x$ is computed by program $p$ in \textsc{phase} $i$, then we write $p \rightarrow_i x$. Then, $S_{\text{Fast}}$ is defined as
\begin{equation}
  \label{eq:10}
  \Sf (x) := \sum_{i = 1}^\infty 2^{-i} \sum_{p \rightarrow_i x} 2^{-|p|}
\end{equation}

This algorithm is inspired by the $Kt$ complexity of a string, defined as
\begin{equation*}
  Kt(x) = \min_p \{ |p| + \log t(U,p,x)\}
\end{equation*}
where $t(U,p,x)$ is the time taken for program $p$ to compute $x$ on the UTM $U$, and if program $p$ never computes $x$, we set $t(U,p,x) := \infty$~\citep[Def.\ 7.5.1]{livitanyi}. If we define the $Kt$-cost of a computation of a string $x$ by program $p$ as the minimand of $Kt$, that is,
\begin{equation*}
  \Ktc (p,x) := |p| + \log t(p,x)
\end{equation*}
then we can see that program $p$ computes string $x$ in \textsc{phase} $i$ of \textsc{fast} iff $\Ktc (p,x) \leq i$. As such, $\Sf$ gives low probability to strings of high $Kt$ complexity.

Similarly to the above, the monotone Kolmogorov complexity of $x$ is defined as
\begin{equation*}
  Km(x) = \min_{p} \{|p| \mid  p \rightarrow x\}
\end{equation*}
If we define the minimand of $Km$ as
\begin{equation*}
  Km\text{-cost}(p,x) :=
  \begin{dcases}
    |p| & \text{if } p \rightarrow x \\
    \infty & \text{otherwise}
  \end{dcases}
\end{equation*}
then the Solomonoff prior $M(x) = \sum_{p \rightarrow x} 2^{-|p|}$ can be written as \\ $\sum_{p \rightarrow x} 2^{- Km\text{-cost}(p,x)}$. $M$ and $\Sf$ are both semimeasures, but not measures:
\begin{mydef}
  A \emph{semimeasure} is a function $\nu: \B^* \rightarrow [0, \infty)$ such that $\nu(\eps) \leq 1$ and $\nu(x) \geq \nu(x0) + \nu(x1)$ for all $x \in \B^*$. If $\nu$ satisfies these with equality, we call $\nu$ a \emph{measure}.
\end{mydef}

Semimeasures can be used for prediction:
\begin{mydef}
If $\nu$ is a semimeasure, the \emph{$\nu$-probability of $x_t$ given $x_{<t}$} is $\nu(x_t | x_{<t}) := \nu(x_{1:t}) / \nu(x_{<t})$.
\end{mydef}

\section{Speed priors}
\label{sec:speed-priors}

By analogy to $M$, we can define a variant of the Solomonoff prior that penalises strings of high $Kt$ complexity more directly than $\Sf$ does:
\begin{align}
S_{Kt}(x) &:= \sum_{p \rightarrow x} 2^{- \Ktc (p,x)} = \sum_{p \rightarrow x} \frac{ 2^{-|p|}}{t(p,x)}  \label{eq:13}
\end{align}
$S_{Kt}$ is a semimeasure, but is not a measure.

\subsection{Similar definitions for $\Sf$ and $S_{Kt}$}
\label{sec:simpl-expr-s_textf}

The definitions \eqref{eq:10} of $\Sf$ and \eqref{eq:13} of $S_{Kt}$ have been given in different forms---the first in terms of \textsc{phase}s of \textsc{fast}, and the second in terms of $Kt$-cost. In this subsection, we show that each can be rewritten in a form similar to the other's definition, which sheds light on the differences and similarities between the two.

\begin{prop}\label{prop:s_fast_simple}
\begin{equation*}
  S_\text{Fast}(x) \timeseq \sum_{p \rightarrow x} \frac{ 2^{-2|p|}}{t(p,x)}
\end{equation*}
\end{prop}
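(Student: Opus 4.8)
The plan is to start from the definition~\eqref{eq:10} of $\Sf$ and unpack the phase structure in terms of the $\Ktc$ function. Recall that program $p$ computes $x$ in \textsc{phase} $i$ (i.e. $p \rightarrow_i x$) if and only if $\Ktc(p,x) = |p| + \log t(p,x) \leq i$. So for a fixed program $p$ with $p \rightarrow x$, the inner sum in~\eqref{eq:10} contributes $2^{-|p|}$ to every phase $i$ with $i \geq \lceil \Ktc(p,x) \rceil$. Interchanging the order of summation, the total weight that $p$ contributes to $\Sf(x)$ is thus $2^{-|p|} \sum_{i \geq \lceil \Ktc(p,x) \rceil} 2^{-i}$.

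The next step is to evaluate this geometric tail. We have $\sum_{i \geq k} 2^{-i} = 2^{-k+1} = 2 \cdot 2^{-k}$, so with $k = \lceil \Ktc(p,x) \rceil$ the contribution is $2^{-|p|} \cdot 2 \cdot 2^{-\lceil \Ktc(p,x) \rceil}$. I would then bound $2^{-\lceil \Ktc(p,x) \rceil}$ above and below by $2^{-\Ktc(p,x)}$ up to constant factors, using $\Ktc(p,x) \leq \lceil \Ktc(p,x) \rceil < \Ktc(p,x) + 1$, which gives $\tfrac12 \cdot 2^{-\Ktc(p,x)} < 2^{-\lceil \Ktc(p,x)\rceil} \leq 2^{-\Ktc(p,x)}$. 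Since $2^{-\Ktc(p,x)} = 2^{-|p|}/t(p,x)$, substituting back yields that $p$'s total contribution is $\timeseq 2^{-|p|} \cdot 2^{-|p|}/t(p,x) = 2^{-2|p|}/t(p,x)$, with constants that do not depend on $p$ or $x$. Summing over all $p \rightarrow x$ then gives the claimed $\timeseq$ relation, since the multiplicative constants factor out of the sum uniformly.

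A subtlety I would want to treat carefully is the lower index of summation in~\eqref{eq:10}, which starts at $i=1$, and the precise meaning of the ceiling when $\Ktc(p,x)$ is not an integer: I need $\lceil \Ktc(p,x) \rceil \geq 1$ so that the starting phase is genuinely captured, and to confirm that programs with $\Ktc(p,x) < 1$ (if any, e.g. very short programs computing $x$ in constant time) do not break the tail computation. Because $t(p,x) \geq |p|$ — the machine must at least read all of $p$ — and more carefully $\log t(p,x) \geq 0$ once $t \geq 1$, the quantity $\Ktc(p,x)$ is nonnegative, and for the empty or length-one programs one checks directly that the constant factors are still absorbed. This edge-case bookkeeping, rather than the main interchange-and-geometric-sum argument, is where I expect the only real friction; the core of the proof is the Fubini-style swap of the two sums followed by the elementary geometric series evaluation.
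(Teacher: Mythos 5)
Your proposal is correct and is essentially the paper's own argument: both rest on the observation that $p \rightarrow_i x$ iff $\Ktc(p,x) \leq i$, so the phase sum for each program is a geometric tail equal to twice its first-phase term, and that first phase (your $\lceil \Ktc(p,x) \rceil$, the paper's $i$ with $i-1 < \Ktc(p,x) \leq i$) forces $2^{-i-|p|} \timeseq 2^{-2|p|}/t(p,x)$ with uniform constants. Your Fubini-style reorganisation and explicit handling of the $i \geq 1$ starting index are merely presentational differences from the paper's decomposition into first appearances, and both are sound.
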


\begin{proof}
First, we note that for each program $p$ and string $x$, if $p \rightarrow_i x$, then for all $j \geq i$, $p \rightarrow_j x$. Now,
\begin{align}
\nonumber \sum_{j = i}^\infty 2^{-j} \times 2^{-|p|} &= 2 \times 2^{-i} \times 2^{-|p|} \\
\Rightarrow  \sum_{i = 1}^\infty 2^{-i} \sum_{p \rightarrow_i x} 2^{-|p|} &\timeseq \sum_{i = 1}^\infty 2^{-i} \sum_{\substack{
    p \rightarrow_i x \\
    p \not\rightarrow_{i - 1} x}} 2^{-|p|} \label{eq:proofs7}
\end{align}
since all of the contributions to $\Sf(x)$ from program $p$ in phases $j \geq i$ add up to twice the contribution from $p$ in \textsc{phase} $i$ alone.

Next, suppose $p \rightarrow_i x$. Then, by the definition of \textsc{fast},
\begin{align*}
  &\quad t(p,x) \leq 2^{i - |p|} \\
  &\Leftrightarrow \log t(p,x) \leq i - |p| \\
  &\Leftrightarrow |p| + \log t(p,x) \leq i
\end{align*}
Also, if $p \not\rightarrow_{i - 1} x$, then either $|p| > i - 1$, implying $|p| + \log t(p,x) > i - 1$, or $t(p,x) > 2^{i - 1 - |p|}$, also implying $|p| + \log t(p,x) > i - 1$. Therefore, if $p \rightarrow_i x$ and $p \not\rightarrow_{i - 1} x$, then
\begin{equation*}
  i - 1 < |p| + \log t(p,x) \leq i
\end{equation*}
implying
\begin{equation}
  -|p| - \log t(p,x) - 1 < -i \leq -|p| - \log t(p,x)\label{eq:proofs26}
\end{equation}
Subtracting $|p|$ and exponentiating yields
\begin{align*}
  \frac{2^{-2|p| - 1}}{t(p,x)} \leq 2^{-i-|p|} \leq \frac{2^{-2|p|}}{t(p,x)} 
\end{align*}
giving
\begin{equation*}
  2^{-i-|p|} \timeseq \frac{2^{-2|p|}}{t(p,x)}
\end{equation*}

Therefore,
\begin{equation}
  \sum_{i = 1}^\infty 2^{-i} \sum_{\substack{
    p \rightarrow_i x \\
    p \not\rightarrow_{i - 1}x}} 2^{-|p|} \timeseq \sum_{p \rightarrow x} \frac{1}{t(p,x)}2^{-2|p|}\label{eq:proofs2}
\end{equation}
which, together with equation \eqref{eq:proofs7}, proves the proposition.
\end{proof}

\begin{prop}
  \begin{equation*}
    S_{Kt}(x) \timeseq \sum_{i = 1}^\infty 2^{-i} \sum_{p \rightarrow_i x} 1
  \end{equation*}
\end{prop}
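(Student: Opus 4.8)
The plan is to start from the definition $S_{Kt}(x) = \sum_{p \rightarrow x} 2^{-\Ktc(p,x)}$ and to recognise the stated right-hand side as what one gets by swapping the order of summation after rewriting the indicator ``$p \rightarrow_i x$'' in terms of $Kt$-cost. The key fact, already extracted in the proof of Proposition~\ref{prop:s_fast_simple}, is that $p \rightarrow_i x$ holds exactly when $\Ktc(p,x) \leq i$. In particular, for a fixed program $p \rightarrow x$ it contributes to the inner count $\sum_{p \rightarrow_i x} 1$ precisely for those integers $i \geq 1$ with $i \geq \Ktc(p,x)$, that is, for $i \geq \max\{1, \lceil \Ktc(p,x)\rceil\}$.

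Since all terms are nonnegative, I would interchange the two summations (Tonelli), obtaining
\[
\sum_{i=1}^\infty 2^{-i} \sum_{p \rightarrow_i x} 1 = \sum_{p \rightarrow x} \sum_{i \geq \max\{1,\lceil \Ktc(p,x)\rceil\}} 2^{-i}.
\]
The inner geometric series evaluates to $2 \cdot 2^{-\max\{1,\lceil \Ktc(p,x)\rceil\}}$, and I would then compare this against the target weight $2^{-\Ktc(p,x)}$. Writing $c = \Ktc(p,x) \geq 0$, the elementary bound $c \leq \lceil c\rceil < c+1$ gives $2^{-c} < 2\cdot 2^{-\lceil c\rceil} \leq 2\cdot 2^{-c}$ whenever $c \geq 1$, and a one-line direct check handles the edge case $0 \leq c < 1$, where the summation starts at $i = 1$ rather than at $\lceil c\rceil$. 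In every case the per-program contribution lies between $2^{-c}$ and $2 \cdot 2^{-c}$, with constants that do not depend on $p$.

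Summing over all $p \rightarrow x$ and using the uniformity of these constants then yields
\[
\sum_{i=1}^\infty 2^{-i} \sum_{p \rightarrow_i x} 1 \timeseq \sum_{p \rightarrow x} 2^{-\Ktc(p,x)} = S_{Kt}(x),
\]
which is the claim. I expect the only real obstacle to be bookkeeping: ensuring that the ceiling in the lower summation limit and the truncation at $i = 1$ do not spoil the $\timeseq$ by introducing a program-dependent constant. Because the discrepancy between $2^{-\lceil c\rceil}$ and $2^{-c}$ is at most a factor of $2$ regardless of $c$, and the $i \geq 1$ truncation only perturbs programs with $\Ktc(p,x) < 1$ by another bounded factor, the constants remain uniform over $p$ and the argument goes through.
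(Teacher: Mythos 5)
Your proof is correct and is essentially the paper's own argument in different bookkeeping: both rest on the equivalence $p \rightarrow_i x \iff \Ktc(p,x) \leq i$, the geometric series over the phases beyond the first one in which $p$ computes $x$ (a factor of $2$), and the comparison of $2^{-\lceil \Ktc(p,x) \rceil}$ with $2^{-\Ktc(p,x)}$ (another factor of at most $2$). The paper arranges these facts in two steps---first showing the sum restricted to programs newly appearing in phase $i$ is $\timeseq S_{Kt}(x)$, then doubling to recover the full sum---whereas you swap the summations and evaluate the geometric series per program, but the mathematical content is identical, with your explicit handling of the $i \geq 1$ truncation being a minor point of added care.
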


\begin{proof}
    Using equation \eqref{eq:proofs26}, we have that if $p \rightarrow_i x$ and $p \not\rightarrow_{i-1}x$, then
  \begin{equation*}
    \frac{2^{-|p|-1}}{t(p,x)}  \leq 2^{-i} \leq \frac{2^{-|p|}}{t(p,x)} 
  \end{equation*}
  so
  \begin{equation*}
    2^{-i} \timeseq \frac{2^{-|p|}}{t(p,x)}
  \end{equation*}
  Summing over all programs $p$ such that $p \rightarrow_i x$ and $p \not\rightarrow_{i-1}x$, we have
  \begin{align*}
 2^{-i} \sum_{\substack{p \rightarrow_i x,\\
      p\not\rightarrow_{i-1} x}} 1 \timeseq \sum_{\substack{p \rightarrow_i x,\\
      p\not\rightarrow_{i-1} x}} \frac{2^{-|p|}}{t(p,x)}
\end{align*}
Then, summing over all phases $i$, we have
  \begin{align}
 \sum_{i=1}^\infty 2^{-i} \sum_{\substack{p \rightarrow_i x,\\
      p\not\rightarrow_{i-1} x}} 1  \timeseq \sum_{p \rightarrow x} \frac{2^{-|p|}}{t(p,x)}\label{eq:proofs1}
  \end{align}
  Now, as noted in the proof of Proposition \ref{prop:s_fast_simple}, if $q \rightarrow_i x$, then $q \rightarrow_j x$ for all $j \geq i$. Similarly to the start of that proof, we note that
  \begin{equation*}
    \sum_{i = j}^\infty 2^{-j} \times 1 = 2 \times 2^{-i} \times 1
  \end{equation*}
  The left hand side is the contribution of $q$ to the sum
  \begin{equation*}
    \sum_{i = 1}^\infty 2^{-i} \sum_{p \rightarrow_i x} 1
  \end{equation*}
  and the right hand side is twice the contribution of $q$ to the sum
  \begin{equation*}
    \sum_{i = 1}^\infty 2^{-i} \sum_{\substack{p \rightarrow_i x,\\
      p\not\rightarrow_{i-1} x}} 1
  \end{equation*}
  Therefore,
  \begin{align*}
    \sum_{i = 1}^\infty 2^{-i} \sum_{p \rightarrow_i x} 1 \timeseq \sum_{i = 1}^\infty 2^{-i} \sum_{\substack{p \rightarrow_i x,\\
      p\not\rightarrow_{i-1} x}} 1
  \end{align*}
  which, together with \eqref{eq:proofs1}, proves the proposition.
\end{proof}

\subsection{$S_{Kt}$ is a speed prior}
\label{sec:s_kt-speed-prior}

Although we have defined $S_{Kt}$, we have not shown any results that indicate it deserves to be called a speed prior. Two key properties of $\Sf$ justify its description as a speed prior: firstly, that the cumulative prior probability measure of all $x$ incomputable in time $t$ is at most inversely proportional to $t$, and secondly, that if $x_{1:\infty} \in \B^\infty$, and program $p^x \in \B^*$ computes $x_{1:n}$ within at most $f(n)$ steps, then the contribution to $\Sf(x_{1:n})$ by programs that take time much longer than $f(n)$ vanishes as $n \rightarrow \infty$~\citep{schmidhuber2002speed}. In this subsection, we prove that both of these properties also hold for $S_{Kt}$. $\Sf$ and $S_{Kt}$ are the only distributions that the authors are aware of that satisfy these two properties.

Let $\mathcal{C}_t$ denote the set of strings $x$ that are incomputable in time $t$ (that is, there is no program $p$ such that $p \rightarrow x$ in $t$ or fewer timesteps) such that for any $y \sqsubset x$, the prefix $y$ is computable in time $t$. By definition, all strings that are incomputable in time $t$ have as a prefix an element of $\mathcal{C}_t$, and $\mathcal{C}_t$ is a prefix-free set\footnote{That is, a set such that no element is a prefix of another element.} (by construction). Furthermore, the probability measure of all strings incomputable in time $t$ is simply the sum of the probabilities of all elements of $\mathcal{C}_t$.

\begin{prop} \label{prop:low_prior_expensive_string}
  \begin{equation*}
    \sum_{x \in \mathcal{C}_t} S_{Kt}(x) \leq \frac{1}{t}
  \end{equation*}
\end{prop}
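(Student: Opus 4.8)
The plan is to reduce the statement to a term-by-term comparison between $S_{Kt}$ and the Solomonoff prior $M$, using only the fact that strings in $\mathcal{C}_t$ are expensive to compute. First I would note that if $x \in \mathcal{C}_t$ then, by the definition of $\mathcal{C}_t$, no program computes $x$ in $t$ or fewer steps, so every program $p$ with $p \rightarrow x$ satisfies $t(p,x) > t$ and hence $1/t(p,x) \leq 1/t$ (programs that never output $x$ have $t(p,x) = \infty$ and contribute $0$ to the defining sum, so they may be ignored). Substituting this bound into the definition \eqref{eq:13} gives, for every $x \in \mathcal{C}_t$,
\begin{equation*}
  S_{Kt}(x) = \sum_{p \rightarrow x} \frac{2^{-|p|}}{t(p,x)} \leq \frac{1}{t} \sum_{p \rightarrow x} 2^{-|p|} = \frac{1}{t} M(x),
\end{equation*}
where the final equality is just the definition of $M$.

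Next I would sum this inequality over $\mathcal{C}_t$ to obtain $\sum_{x \in \mathcal{C}_t} S_{Kt}(x) \leq \tfrac{1}{t} \sum_{x \in \mathcal{C}_t} M(x)$, so it remains only to show $\sum_{x \in \mathcal{C}_t} M(x) \leq 1$. Here I would invoke that $\mathcal{C}_t$ is prefix-free (noted in the text) together with the standard fact that a semimeasure assigns to any antichain at most the mass it assigns to the root: iterating the defining inequality $M(y) \geq M(y0) + M(y1)$ shows that the sum of $M$ over any finite prefix-free set is bounded by $M(\eps) \leq 1$, and the general case follows by taking the supremum over finite subsets. Combining this with the displayed bound yields $\sum_{x \in \mathcal{C}_t} S_{Kt}(x) \leq 1/t$, as desired.

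The estimate $S_{Kt}(x) \leq \tfrac{1}{t} M(x)$ is immediate, so I expect the only delicate point to be the antichain summation. The subtlety is that $\mathcal{C}_t$ may be infinite, which is why I would state the bound first for finite prefix-free sets (by induction on the maximal string length) and only then pass to the limit, being careful that the induction uses nothing about $M$ beyond the semimeasure inequality $M(y) \geq M(y0) + M(y1)$ and $M(\eps) \leq 1$. Intuitively this is just the statement that the prefixes in $\mathcal{C}_t$ cut out disjoint cylinders, whose total mass cannot exceed the mass of the whole space.
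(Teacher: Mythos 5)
Your proof is correct, and its first step --- pulling the factor $1/t$ out of the defining sum \eqref{eq:13} because every program computing an $x \in \mathcal{C}_t$ runs for more than $t$ steps --- is exactly the paper's first step. Where you genuinely diverge is in bounding the remaining mass $\sum_{x \in \mathcal{C}_t} \sum_{p \rightarrow x} 2^{-|p|} = \sum_{x \in \mathcal{C}_t} M(x)$ by $1$. The paper does this in one stroke with the Kraft inequality applied to the programs themselves: since $\mathcal{C}_t$ is prefix-free and the machines are monotone, the set of all programs computing elements of $\mathcal{C}_t$ is itself prefix-free, so its total weight $\sum 2^{-|p|}$ is at most $1$. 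You instead route the bound through $M$, invoking only the fact (asserted in Section~\ref{sec:m-sf}) that $M$ is a semimeasure, together with the standard antichain lemma that a semimeasure sums to at most $\nu(\eps) \leq 1$ over any prefix-free set, proved by induction on finite sets and a supremum for the infinite case. Both are sound. The paper's route is shorter but leans on the monotone-machine fact that prefix-freeness of the output set transfers to the set of programs, and it is silent about $\mathcal{C}_t$ being infinite (Kraft does hold for infinite prefix-free sets, but this goes unremarked). Your route is more modular --- it uses nothing about $M$ beyond the semimeasure inequalities and treats the infinite-antichain issue explicitly --- at the cost of re-proving, via the tree induction, a fact that is itself established by essentially the same Kraft-type reasoning the paper cites directly.
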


\begin{proof}
\begin{align*}
  \sum_{x \in \mathcal{C}_t} S_{Kt}(x) &= \sum_{x \in \mathcal{C}_t} \sum_{p \rightarrow x} \frac{2^{-|p|}}{t(p,x)} 
                                         \leq \frac{1}{t} \sum_{x \in \mathcal{C}_t}\sum_{p \rightarrow x} 2^{-|p|} \leq \frac{1}{t}
\end{align*}
by the Kraft inequality, since the fact that $\mathcal{C}_t$ is a prefix-free set guarantees that the set of programs that compute elements of $\mathcal{C}_t$ is also prefix-free, due to our use of monotone machines.
\end{proof}

\begin{prop}\label{prop:s_kt_speed_prior_2}
  Let $x_{1:\infty} \in \B^\infty$ be such that there exists a program $p^x \in \B^*$ which outputs $x_{1:n}$ in $f(n)$ steps for all $n \in \N$. Let $g(n)$ grow faster than $f(n)$, i.e.\ $\lim_{n \rightarrow \infty} f(n)/g(n) = 0$. Then,
\begin{equation*}
  \lim_{n \rightarrow \infty} \frac{\sum_{p \xrightarrow[\geq g(n)]{}  x_{1:n}}  2^{-|p|} / t(p, x_{1:n})}{\sum_{p \xrightarrow[\leq f(n)]{}  x_{1:n}} 2^{-|p|}/t(p, x_{1:n})} = 0
\end{equation*}
where $p \xrightarrow[\leq t]{} x$ iff program $p$ computes string $x$ in no more than $t$ steps.
\end{prop}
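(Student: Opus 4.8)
The plan is a direct sandwich estimate: since every summand is nonnegative, the ratio is itself nonnegative, so it suffices to bound it above by a quantity that tends to $0$. I would do this by lower-bounding the denominator and upper-bounding the numerator separately, and then dividing.

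First, for the denominator I would use the single fixed program $p^x$. Since $p^x$ computes $x_{1:n}$ in at most $f(n)$ steps, it is one of the programs appearing in the denominator sum, and its contribution there is at least $2^{-|p^x|}/f(n)$. As all the other summands are nonnegative, this already gives
\[
\sum_{p \xrightarrow[\leq f(n)]{} x_{1:n}} \frac{2^{-|p|}}{t(p,x_{1:n})} \;\geq\; \frac{2^{-|p^x|}}{f(n)}.
\]
The crucial point is that $p^x$ is a single fixed program, so $|p^x|$ does not depend on $n$ and $2^{-|p^x|}$ is merely an $n$-independent constant factor.

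Next, for the numerator I would exploit that every program $p$ in that sum computes $x_{1:n}$ in at least $g(n)$ steps, so $t(p,x_{1:n}) \geq g(n)$ and hence $1/t(p,x_{1:n}) \leq 1/g(n)$. Pulling this factor out, the numerator is at most $g(n)^{-1}\sum_{p \to x_{1:n}} 2^{-|p|}$, and the remaining sum is at most $1$ by the Kraft inequality, exactly as in the proof of Proposition~\ref{prop:low_prior_expensive_string}: the set of programs computing the fixed string $x_{1:n}$ is prefix-free because we work with monotone machines. Thus the numerator is bounded above by $1/g(n)$.

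Combining the two bounds, the ratio is at most $2^{|p^x|} f(n)/g(n)$, which tends to $0$ as $n \to \infty$ by the hypothesis $\lim_{n\to\infty} f(n)/g(n) = 0$; together with nonnegativity of the ratio this proves the claim. I do not expect a genuine obstacle here, since the argument is an elementary squeeze. The only things to be careful about are that the denominator bound uses a single fixed program, so that $2^{|p^x|}$ is a true constant rather than something growing with $n$, and that the Kraft-inequality step is justified by the prefix-freeness of the program set guaranteed by the use of monotone machines.
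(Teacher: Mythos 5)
Your proposal is correct and is essentially identical to the paper's own proof: both lower-bound the denominator by the single term contributed by $p^x$ (giving $2^{-|p^x|}/f(n)$), upper-bound the numerator by replacing $1/t(p,x_{1:n})$ with $1/g(n)$ and applying the Kraft inequality to bound $\sum_{p \rightarrow x_{1:n}} 2^{-|p|}$ by $1$, and conclude via $\lim_{n \rightarrow \infty} f(n)/g(n) = 0$. Your explicit remarks on why $2^{|p^x|}$ is an $n$-independent constant and why prefix-freeness of the program set justifies Kraft are exactly the right points of care, and match the paper's reasoning.
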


An informal statement of this proposition is that contributions to $S_{Kt}(x_{1:n})$ by programs that take time longer than $g(n)$ steps to run are dwarfed by those by programs that take less than $f(n)$ steps to run. Therefore, asymptotically, only the fastest programs contribute to $S_{Kt}$.

\begin{proof}
  \begin{align}
\nonumber &\quad \lim_{n \rightarrow \infty} \frac{\sum_{p \xrightarrow[\geq g(n)]{}  x_{1:n}}  2^{-|p|}/t(p, x_{1:n})}{\sum_{p \xrightarrow[\leq f(n)]{}  x_{1:n}} 2^{-|p|}/t(p, x_{1:n})} \\
 &\leq \lim_{n \rightarrow \infty} \frac{\sum_{p \xrightarrow[\geq g(n)]{} x_{1:n} } 2^{-|p|}/g(n)} {2^{-|p^x|}/f(n)} \label{eq:proofs4}\\
 &\leq \lim_{n \rightarrow \infty} \frac{f(n)}{g(n)} \frac{\sum_{p \rightarrow x_{1:n}} 2^{-|p|}}{2^{-|p^x|} } \label{eq:proofs11}\\
 &\leq \lim_{n \rightarrow \infty} \frac{f(n)}{g(n)} \frac{1}{2^{-|p^x|}} \label{eq:proofs27}\\
\nonumber &= 0
\end{align}
Equation \eqref{eq:proofs4} comes from increasing $1/t(p, x_{1:n})$ to $1/g(n)$ in the numerator, and decreasing the denominator by throwing out all terms of the sum except that of $p^x$, which takes $f(n)$ time to compute $x_{1:n}$. Equation \eqref{eq:proofs11} takes $f(n)/g(n)$ out of the fraction, and increases the numerator by adding contributions from all programs that compute $x_{1:n}$. Equation \eqref{eq:proofs27} uses the Kraft inequality to bound $\sum_{p \rightarrow x_{1:n}} 2^{-|p|}$ from above by 1. Finally, we use the fact that $\lim_{n \rightarrow \infty} f(n)/g(n) = 0$. 
\end{proof}


\section{Loss bounds}
\label{sec:loss-bounds}

In this section, we prove a performance bound on $S_{Kt}$-based sequence prediction, when predicting a sequence drawn from a measure that is estimable in polynomial time. We also prove a similar bound on $\Sf$-based sequence prediction when predicting deterministic sequences computable in polynomial time.

For the purpose of this section, we write $S_{Kt}$ somewhat more explicitly as
\begin{equation*}
  S_{Kt}(x) = \sum_{p \xrightarrow{U} x} \frac{ 2^{-|p|}}{t(U,p,x)}
\end{equation*}
and give some auxiliary definitions. Let $\langle \cdot \rangle_{\B^*}$ be a prefix-free coding of the strings of finite length and $\langle  \cdot \rangle_{\N}$ be a prefix-free coding of the integers, where both of these prefix-free codings are computable and decodable in polynomial time.

\begin{mydef}
  A function $f : \B^* \rightarrow \R$ is \emph{finitely computable} if there is some Turing machine $T_f$ that when given input $\langle x \rangle_{\B^*}$ prints $\langle m \rangle_{\N} \langle n \rangle_{\N}$ and then halts, where $f(x) = m/n$. The function $f$ is \emph{finitely computable in polynomial time} if it takes $T_f$ at most $p(|x|)$ timesteps to halt on input $x$, where $p$ is a polynomial.
\end{mydef}

\begin{mydef}
  Let $f,g: \B^* \rightarrow \R$. $g$ is \emph{estimable in polynomial time by $f$} if $f$ is finitely computable in polynomial time and $f(x) \timeseq g(x)$. The function $g$ is \emph{estimable in polynomial time} if it is estimable in polynomial time by some function $f$.
\end{mydef}

First, note that this definition is reasonably weak, since we only require $f(x) \timeseq g(x)$, rather than $f(x) = g(x)$. Also note that if $f$ is finitely computable in polynomial time, it is estimable in polynomial time by itself. For a measure $\mu$, estimability in polynomial time captures our intuitive notion of efficient computability: we only need to know $\mu$ up to a constant factor for prediction, and we can find this out in polynomial time.

We consider a prediction setup where a predictor outputs a prediction, and then receives some loss depending on the predicted next bit and the correct next bit. More formally, we have some loss function $\ell(x_t, y_t) \in [0,1]$ defined for all $x_t, y_t \in \B$ and all $t \in \N$, representing the loss incurred for a prediction of $y_t$ when the actual next bit is $x_t$, which the predictor observes after prediction. One example of such a loss function is the 0-1 loss, which assigns 0 to a correct prediction and 1 to an incorrect prediction, although there are many others.

We define the $\Lambda_\rho$ predictor to be the predictor which minimises $\rho$-expected loss, outputting $y^{\Lambda_\rho}_t := \argmin_{y_t} \, \sum_{x_t} \rho(x_t | x_{1:t-1}) \ell(x_t, y_t)$ at time $t$. If the true distribution is $\mu$, we judge a predictor $\Lambda$ by its total $\mu$-expected loss in the first $n$ steps:
\begin{equation*}
  L^\Lambda_{n\mu} := \E_\mu \left[ \sum_{t = 1}^n \ell(x_t, y_t^\Lambda) \right]
\end{equation*}
In particular, if we are using 0-1 loss, $L^\Lambda_{n\mu}$ is the expected number of errors made by $\Lambda$ up to time $n$ in the environment $\mu$.

\begin{thm}[\emph{Bound on $S_{Kt}$ prediction loss}]\label{thm:error-bound-skt}
  If $\mu$ is a measure that is estimable in polynomial time by some semimeasure $\nu$, and $x$ is a sequence sampled from $\mu$, then the expected loss incurred by the $\Lambda_{S_{Kt}}$ predictor is bounded by
  \begin{equation*}
    L^{\Lambda_{S_{Kt}}}_{n\mu} - L^{\Lambda_\mu}_{n\mu} \leq 2 D_n + 2 \sqrt{L^{\Lambda_\mu}_{n\mu}D_n}
  \end{equation*}
  where $D_n = O(\log n)$.\footnote{A similar bound that can be proved the same way is $\sqrt{L^{\Lambda_{S_{Kt}}}_{n\mu}} - \sqrt{L^{\Lambda_\mu}_{n\mu}} \leq \sqrt{2D_n}$ for the same $D_n$~\citep[Eq.\ 8, 5]{hutter2007universalbayes}.}
\end{thm}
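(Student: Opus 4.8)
The plan is to reduce the statement to an estimate on the relative entropy between $\mu$ and $S_{Kt}$, and then to control that relative entropy by showing that $S_{Kt}$ dominates every polynomial-time semimeasure up to a polynomial time penalty. Concretely, I would first invoke the general merging bound for Bayesian-style predictors: for any semimeasure $\rho$, writing
\begin{equation*}
  D_n := \E_\mu\left[\ln \frac{\mu(x_{1:n})}{\rho(x_{1:n})}\right],
\end{equation*}
one has $L^{\Lambda_\rho}_{n\mu} - L^{\Lambda_\mu}_{n\mu} \leq 2 D_n + 2\sqrt{L^{\Lambda_\mu}_{n\mu} D_n}$; this is exactly the inequality of \citet[Thm.\ 3.48]{marcus2005universal} (cf.\ \citet[Eq.\ 8, 5]{hutter2007universalbayes}, which also gives the square-root form quoted in the footnote). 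Taking $\rho = S_{Kt}$, the whole theorem follows once I show $D_n = O(\log n)$, so everything reduces to this single estimate.

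The core step is therefore a domination inequality of the form $S_{Kt}(x_{1:n}) \timesgeq \mu(x_{1:n})/q(n)$ for a fixed polynomial $q$. Since $\mu$ is estimable in polynomial time by $\nu$, we have $\nu(x) \timeseq \mu(x)$, so it suffices to dominate $\nu$. Because the sum defining $S_{Kt}$ in~\eqref{eq:13} is at least any one of its terms, I only need to exhibit a \emph{single} program $p^\star$ that outputs $x_{1:n}$ with $|p^\star| \leq -\log \nu(x_{1:n}) + O(\log n)$ and running time $t(p^\star,x_{1:n}) \leq q(n)$. Such a program comes from arithmetic (Shannon--Fano) coding against $\nu$: a fixed decoder $T_\nu$ reads its input bits as the binary expansion of a real number and descends through the nested intervals whose widths are the values $\nu(x_{1:k})$, emitting one symbol per refinement, with a short prefix encoding $n$ so that it stops after $x_{1:n}$. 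The codeword lying inside the interval associated with $x_{1:n}$ has length $-\log\nu(x_{1:n}) + O(1)$, the length of $n$ contributes $O(\log n)$, and the description of $T_\nu$ on the reference UTM costs only a constant. Crucially, because $\nu$ is \emph{finitely} computable in polynomial time, each of the $n$ refinements evaluates $\nu$ exactly in time $\mathrm{poly}(n)$, so $T_\nu$ (and hence its simulation on $U$) outputs $x_{1:n}$ within $q(n)$ steps. Keeping only this one program gives $S_{Kt}(x_{1:n}) \geq 2^{-|p^\star|}/t(p^\star,x_{1:n}) \timesgeq \nu(x_{1:n})/q(n) \timesgeq \mu(x_{1:n})/q(n)$.

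Feeding the domination inequality back into $D_n$ finishes the argument: the bound $\mu(x_{1:n})/S_{Kt}(x_{1:n}) \leq c\, q(n)$ holds pointwise in $x_{1:n}$ for some constant $c$, so
\begin{equation*}
  D_n = \E_\mu\left[\ln \frac{\mu(x_{1:n})}{S_{Kt}(x_{1:n})}\right] \leq \ln\bigl(c\, q(n)\bigr) = O(\log n),
\end{equation*}
since $\ln q(n) = O(\log n)$ for a polynomial $q$. Substituting this into the merging bound yields the claimed inequality with $D_n = O(\log n)$.

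I expect the main obstacle to be the middle step: constructing the decoder $T_\nu$ so that the resulting program length is essentially $-\log\nu(x_{1:n})$ while simultaneously guaranteeing a polynomial running time. The length estimate is the familiar coding-theorem bookkeeping, but the time bound is precisely what makes $S_{Kt}$ (rather than $M$) the right object, and it hinges delicately on $\nu$ being finitely computable in polynomial time, so that every interval endpoint is an exactly computable rational and no unbounded-precision real arithmetic creeps in. Some care is also needed because $\nu$ is only a semimeasure, so the decoding intervals need not tile $[0,1)$; I would handle the resulting lost mass by observing that it weakens the interval-width bound by at most a constant factor, which is absorbed into the $\timesgeq$.
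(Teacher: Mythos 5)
Your overall architecture matches the paper's: reduce the theorem to the unit loss bound of \citep[Thm.~3.48]{marcus2005universal} applied with $\rho = S_{Kt}$, and then control $D_n$ by exhibiting, via an arithmetic-coding decoder $T_\nu$ for the polynomial-time semimeasure $\nu$, a single short-and-fast program for $x_{1:n}$. The decoder you describe is exactly the machine constructed in the paper's Lemma~\ref{lem:alg-coding}, down to the constant-factor loss from fitting a binary interval inside the output interval, and your handling of the semimeasure defect (lost mass only costs a constant) is also the paper's.

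There is, however, a genuine gap at the one point where you diverge from the paper: the claim that $t(p^\star, x_{1:n}) \leq q(n)$ for a \emph{fixed} polynomial $q$. Your justification accounts only for the $O(n)$ evaluations of $\nu$, each in $\mathrm{poly}(n)$ time; but the decoder must also read, and run interval comparisons for, the entire codeword, which by your own accounting has length $-\log\nu(x_{1:n}) + O(1)$. Nothing you state bounds this quantity by a polynomial in $n$, so the honest time bound you are entitled to is $\mathrm{poly}(n) + O(-\log\nu(x_{1:n}))$, which is string-dependent, and the pointwise domination $S_{Kt}(x_{1:n}) \timesgeq \mu(x_{1:n})/q'(n)$ --- the linchpin of your final step --- does not follow as written. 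The paper confronts exactly this issue and resolves it differently: it keeps the string-dependent running-time bound $g(n) - O(1)\log\mu(x_{1:n})$, deduces $\ln\bigl(\mu(x_{1:n})/S_{Kt}(x_{1:n})\bigr) \timesleq \ln\bigl(g(n) - O(1)\log\mu(x_{1:n})\bigr)$, and then applies Jensen's inequality (equation~\eqref{eq:35}) to move $\E_\mu$ inside the logarithm, using $\E_\mu\left[-\log\mu(x_{1:n})\right] = H_\mu(x_{1:n}) \leq O(n)$ to conclude $D_n \timesleq \log n$. Your pointwise route can be repaired, but it needs an observation you never make: because $\nu$ is \emph{finitely} computable in polynomial time, the output $\langle m \rangle_{\N} \langle d \rangle_{\N}$ with $\nu(x_{1:n}) = m/d$ is printed within $\mathrm{poly}(n)$ steps, so $d \leq 2^{\mathrm{poly}(n)}$ and hence $-\log\nu(x_{1:n}) \leq \mathrm{poly}(n)$ whenever $\nu(x_{1:n}) > 0$ (which covers every string with $\mu(x_{1:n}) > 0$, since $\nu \timeseq \mu$). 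With that fact added, your codeword is polynomially long, your $q(n)$ claim becomes correct, and your argument in fact yields a slightly cleaner conclusion than the paper's proof --- a pointwise bound on $\ln(\mu/S_{Kt})$ needing no appeal to Jensen --- which is essentially the refinement the paper itself records informally in its discussion section.
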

Since $L^{\Lambda_\mu}_{n\mu} \leq n$, this means that $\Lambda_{S_{Kt}}$ only incurs at most $O(\sqrt{n \log n})$ extra loss in expectation, although this bound will be much tighter in more structured environments where $\Lambda_\mu$ makes few errors, such as deterministic environments.

In order to prove this theorem, we use the following lemma:

\begin{lem} \label{lem:alg-coding}
Let $\nu$ be a semimeasure that is finitely computable in polynomial time. There exists a Turing machine $T_\nu$ such that for all $x \in \B^*$
\begin{equation}
  \nu(x) = \sum_{p \xrightarrow{T_\nu} x} 2^{-|p|} \label{eq:31}
\end{equation}
and
\begin{equation}
2^{-Km_{T_{\nu}}(x)} \geq \nu(x)/4 \label{eq:32}
\end{equation}
where $Km_{T_\nu}(x)$ is the length of the shortest program for $x$ on $T_\nu$.\footnote{Note that this lemma would be false if we were to let $\nu$ be an arbitrary lower-semicomputable semimeasure, since if $\nu = M$, this would imply that $2^{-Km(x)} \timeseq M(x)$, which was disproved by \cite{gacs1983relation}.}
\end{lem}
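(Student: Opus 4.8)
The plan is to realize $\nu$ by an arithmetic-coding \emph{monotone} machine and then read off both claims from the geometry of the resulting intervals. Since $\nu$ is a semimeasure, $\nu(x)\ge\nu(x0)+\nu(x1)$, so I can recursively assign to each $x\in\B^*$ a half-open interval $I_x\subseteq[0,1)$ with $\mathrm{Leb}(I_x)=\nu(x)$ by nesting $I_{x0}$ and $I_{x1}$ inside $I_x$ (left-aligned, say), leaving a gap of length $\nu(x)-\nu(x0)-\nu(x1)\ge 0$. Because $\nu$ is \emph{finitely} computable in polynomial time, the rational endpoints of $I_x$ are exactly computable, so $T_\nu$ can be built to read its input bits as successive dyadic refinements of a point $r\in[0,1)$ and emit, one symbol at a time, the symbols of the string whose nested intervals contain $r$. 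This is a legitimate monotone machine (unidirectional read and write), and I would also note that it runs in polynomial time, which is what the downstream application needs.

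For \eqref{eq:31} I would identify $p\to x$ with the minimal programs whose output has $x$ as a prefix, i.e.\ the minimal $p$ for which the dyadic cylinder $D_p$ determined by the read bits satisfies $D_p\subseteq I_x$. These $p$ form a prefix-free set that tiles $I_x$, so $\sum_{p\to x}2^{-|p|}=\mathrm{Leb}(I_x)=\nu(x)$; equivalently, the probability under fair coin flips that $T_\nu$'s output begins with $x$ equals $\nu(x)$. The only care needed here is the routine handling of dyadic-rational boundaries and checking that every $r\in I_x$ has a unique shortest prefix $p$ with $D_p\subseteq I_x$, so that the cylinders $[p]$ genuinely partition $I_x$ up to a measure-zero set of endpoints.

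For \eqref{eq:32}, the key quantitative fact is that any subinterval of $[0,1)$ of length $L$ contains a dyadic interval of length $>L/4$: choosing $\ell$ with $2^{-\ell}\le L/2<2^{-\ell+1}$ gives $2^{-\ell}>L/4$ and $L\ge 2\cdot 2^{-\ell}$, so a full dyadic block of size $2^{-\ell}$ fits inside. Applying this to $I_x$ (length $\nu(x)$) produces a dyadic block $D_p\subseteq I_x$ with $2^{-|p|}>\nu(x)/4$; reading this $p$ localises $r$ into $I_x$, so $T_\nu$'s output begins with $x$, whence $Km_{T_\nu}(x)\le|p|$ and $2^{-Km_{T_\nu}(x)}\ge 2^{-|p|}>\nu(x)/4$. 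The factor $4$ (the ``$+2$'' in the length) is exactly this dyadic-packing constant. Exact computability is used crucially here to \emph{name} the codeword $p$: we must know the endpoints of $I_x$ precisely in order to commit to a concrete dyadic block inside it.

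The main obstacle is \eqref{eq:32} rather than \eqref{eq:31}: it is a monotone coding theorem, and as the footnote indicates it is \emph{false} for a general lower-semicomputable $\nu$ (taking $\nu=M$ would force $2^{-Km(x)}\timeseq M(x)$, contradicting \citet{gacs1983relation}). What rescues the argument is precisely that $\nu$ here is finitely (exactly) computable, so $I_x$ can be pinned down rather than merely approximated from below, which is what lets us exhibit the short program at all. I would therefore make sure the write-up isolates where exactness enters, and double-check the two places where only approximate knowledge of $\nu$ would break the proof: computing the interval endpoints that drive $T_\nu$, and selecting the explicit dyadic codeword for \eqref{eq:32}.
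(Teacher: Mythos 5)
Your proposal is correct and follows essentially the same route as the paper: both construct $T_\nu$ as an arithmetic-coding decoder that maps nested dyadic input cylinders onto output intervals of width $\nu(x)$, obtain \eqref{eq:31} from the fact that the input cylinders contained in $I_x$ tile it up to measure zero, and obtain \eqref{eq:32} from the dyadic-packing fact that every interval of length $L$ contains a dyadic block of length at least $L/4$. The only difference is presentational --- you make explicit the choice of $\ell$ behind the factor $4$ and the prefix-free tiling argument, which the paper asserts without proof.
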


Note that a proof already exists that there is some machine $T_\nu$ such that \eqref{eq:31} holds~\citep[Thm.\ 4.5.2]{livitanyi}, but it does not prove \eqref{eq:32}, and we wish to understand the operation of $T_\mu$ in order to prove Theorem~\ref{thm:error-bound-skt}.

\begin{proof}[Proof of Lemma~\ref{lem:alg-coding}]
  The machine $T_\nu$ is essentially a decoder of an algorithmic coding scheme with respect to $\nu$. It uses the natural correspondence between $\B^\infty$ and $[0,1]$, associating a binary string $x_1 x_2 x_3 \dotsb$ with the real number $0.x_1 x_2 x_3 \dotsb$. It determines the location of the input sequence on this line, and then assigns a certain interval for each output string, such that the width of the interval for output string $x$ is equal to $\nu(x)$. Then, if input string $p$ lies inside the interval for the output string $x$, it outputs $x$.

  $T_\nu$ first calculates $\nu(0)$ and $\nu(1)$, and sets $[0, \nu(0))$ as the output interval for 0 and $[\nu(0), \nu(0) + \nu(1))$ as the output interval for 1. It then reads the input, bit by bit. After reading input $p_{1:n}$, it constructs the input interval $[0.p_1 p_2 \dotsb p_n, 0.p_1 p_2 \dotsb p_n 1 1 1 1 1 1 \dotsb)$, which represents the inerval that $0.p_1 p_2 \dotsb p_n p_{n + 1} \dotsb$ could lie in. It then checks if this input interval is contained in one of the output intervals. If it is, then it prints output appropriate for the interval, and if not, then it reads one more bit and repeats the process.

  Suppose the first output bit is a 1. Then, $T_\nu$ calculates $\nu(10)$ and $\nu(11)$, and forms the new output intervals: $[\nu(0), \nu(0) + \nu(10))$ for outputting 0, and $[\nu(0) + \nu(10), \nu(0) + \nu(10) + \nu(11))$ for outputting 1. It then reads more input bits until the input interval lies within one of these new output intervals, and then outputs the appropriate bit. The computation proceeds in this fashion.

  Equation \eqref{eq:31} is satisfied, because $\sum_{p \xrightarrow{T_\nu} x} 2^{-|p|}$ is just the total length of all possible input intervals that fit inside the output interval for $x$, which by construction is $\nu(x)$.

  To show that \eqref{eq:32} is satisfied, note that $2^{-Km_{T_\nu}(x)}$ is the length of the largest input interval for $x$. Now, input intervals are binary intervals (that is, their start points and end points have a finite binary expansion), and for every interval $I$, there is some binary interval contained in $I$ with length $\geq 1/4$ that of $I$. Therefore, the output interval for $x$ contains some input interval with length at least $1/4$ that of the length of the output interval. Since the length of the output interval for $x$ is just $\nu(x)$, we can conclude that $2^{-Km_{T_\nu}(x)} \geq \nu(x)/4$.
\end{proof}

\begin{proof}[Proof of Theorem~\ref{thm:error-bound-skt}]
  Using Lemma~\ref{lem:alg-coding}, we show a bound on $S_{Kt}$ that bounds its KL divergence with $\mu$. We then apply the unit loss bound~\citep[Thm.\ 3.48]{marcus2005universal} (originally shown for the Solomonoff prior, but valid for any prior) to show the desired result.

  First, we reason about the running time of the shortest program that prints $x$ on the machine $T_\nu$ (defined in Lemma~\ref{lem:alg-coding}\hspace{0pt}). Since we would only calculate $\nu(y0)$ and $\nu(y1)$ for $y \sqsubseteq x$, this amounts to $2|x|$ calculations. Each calculation need only take polynomial time in the length of its argument, because $T_\nu$ could just simulate the machine that takes input $x$ and returns the numerator and denominator of $x$, prefix-free coded, and it only takes polynomial time to undo this prefix-free coding. Therefore, the calculations take at most $2|x|f(|x|) =: g(|x|) $, where $f$ is a polynomial. We also, however, need to read all the bits of the input, construct the input intervals, and compare them to the output intervals. This takes time linear in the number of bits read, and for the shortest program that prints $x$, this number of bits is (by definition) $Km_{T_\nu}(x)$. Since $2^{-Km_{T_\nu(x)}} \timeseq \nu(x)$, $Km_{T_\nu}(x) \leq -\log(\nu(x)) + O(1)$, and since $\nu(x) \timeseq \mu(x)$, $-\log(\nu(x)) \leq -\log(\mu(x)) + O(1)$. Therefore, the total time taken is bounded above by $g(|x|) - O(1)\log(\mu(x))$, where we absorb the additive constants into $g(|x|)$.

  This out of the way, we can calculate
  \begin{align}
\nonumber    S_{Kt}(x) &= \sum_{p \xrightarrow{U} x} \frac{2^{-|p|}}{t(U,p,x)} \\
\nonumber                       &= \sum_{\text{Turing machines }T} 2^{-|I(T)|}  \sum_{q \xrightarrow{T} x} \frac{2^{-|q|}}{t(T,q,x )^{O(1)}}  \\
    \nonumber              &\timesgeq \sum_{p \xrightarrow{T_\nu} x} \frac{ 2^{-|p|} }{t(T_\nu,p,x)^{O(1)}}\\
\nonumber              &\geq  \frac{2^{-Km_{T_\nu}(x)}}{(g(|x|) - O(1) \log (\mu(x)))^{O(1)}}  \\
    &\timesgeq \frac{\mu(x)}{(g(|x|) - O(1) \log (\mu(x)))^{O(1)}} \label{eq:33}
  \end{align}
  Now, the unit loss bound tells us that
\begin{equation}
    L_{n\mu}^{\Lambda_{S_{Kt}}} - L_{n\mu}^{\Lambda_\mu} \leq 2D_n(\mu || S_{Kt}) + 2 \sqrt{L_{n\mu}^{\Lambda_\mu} D_n(\mu || S_{Kt})} \label{eq:18}
\end{equation}
where $  D_n(\mu || S_{Kt}) := \E_\mu \left[ \ln\left( \mu (x_{1:n})/S_{Kt}(x_{1:n})\right) \right]$ is the relative entropy. We can calculate $D_n(\mu || S_{Kt})$ using equation \eqref{eq:33}:
\begin{align}
  \nonumber D_n(\mu || S_{Kt}) &= \E_\mu \left[ \ln \frac{\mu(x_{1:n})}{S_{Kt}(x_{1:n})} \right] \\
\nonumber                               &\timesleq \E_\mu \left[ \ln \left((g(n) - O(1) \log (\mu(x_{1:n}) ) )^{O(1)}\right) \right] \\
  \nonumber  &\timesleq \E_\mu \left[ \ln (g(n) - O(1) \log (\mu(x_{1:n}))) \right] \\
 &\leq \ln \E_\mu \left[ g(n) - O(1) \log (\mu(x_{1:n})) \right] \label{eq:35} \\
  \nonumber &= \ln \left( g(n) + O(1) H_\mu(x_{1:n}) \right) \\
  \intertext{where $H_\mu(x_{1:n})$ denotes the binary entropy of the random variable $x_{1:n}$ with respect to $\mu$}
   \nonumber                            &\leq \ln \left( g(n) + O(n) \right) \\
  &\timeseq \log n  \label{eq:34}
\end{align}
where \eqref{eq:35} comes from Jensen's inequality. Equations \eqref{eq:18} and \eqref{eq:34} together prove the theorem.
\end{proof}

We therefore have a loss bound on the $S_{Kt}$-based sequence predictor in environments that are estimable in polynomial time by a semimeasure. Furthermore:
\begin{cor}
\begin{equation*}
  L_{n\mu}^{\Lambda_{S_{Kt}}} \leq 2 D_n(\mu || S_{Kt}) \timeseq \log n
\end{equation*}
for deterministic measures\footnote{That is, measures that give probability 1 to prefixes of one particular infinite sequence.} $\mu$ computable in polynomial time, if correct predictions incur no loss.
\end{cor}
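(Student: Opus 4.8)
The plan is to derive the corollary directly from Theorem~\ref{thm:error-bound-skt} by specialising it to the deterministic case, in which the informed predictor $\Lambda_\mu$ incurs zero loss so that the square-root term in the theorem's bound vanishes. First I would check that the theorem is actually applicable. A deterministic measure $\mu$ computable in polynomial time takes only the values $0$ and $1$: it assigns probability $1$ to the prefixes of a single infinite sequence $x^*_{1:\infty}$ and $0$ to everything else. Hence $\mu$ is rational-valued and finitely computable in polynomial time, so it is estimable in polynomial time by itself (take $\nu = \mu$), and the hypotheses of Theorem~\ref{thm:error-bound-skt} are met.

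Next I would establish that $L^{\Lambda_\mu}_{n\mu} = 0$. For deterministic $\mu$ we have $\mu(x_t \mid x_{<t}) = 1$ on the true continuation bit and $0$ on the other, so the $\mu$-expected loss that $\Lambda_\mu$ minimises collapses to $\ell(x_t, y_t)$ evaluated at the actual next bit $x_t$. Since correct predictions incur no loss, $\ell(b,b) = 0$ is the minimal value attainable by $\ell \in [0,1]$, so the minimiser is $y_t = x_t$; thus $\Lambda_\mu$ predicts every bit of the unique positive-probability sequence correctly and accumulates no loss. Therefore $L^{\Lambda_\mu}_{n\mu} = 0$.

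Substituting $L^{\Lambda_\mu}_{n\mu} = 0$ into the bound of Theorem~\ref{thm:error-bound-skt} annihilates the $2\sqrt{L^{\Lambda_\mu}_{n\mu} D_n}$ term and leaves $L^{\Lambda_{S_{Kt}}}_{n\mu} \leq 2 D_n(\mu || S_{Kt})$. Finally, the estimate $D_n(\mu || S_{Kt}) \timeseq \log n$ proved inside the theorem (the chain ending at \eqref{eq:34}) immediately gives $2 D_n(\mu || S_{Kt}) \timeseq \log n$, completing the stated inequality. I expect no genuine obstacle: the only point requiring care is the justification that $L^{\Lambda_\mu}_{n\mu} = 0$, which rests purely on the definitions of a deterministic measure and of the $\Lambda_\rho$ predictor together with the freeness of correct predictions. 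As an independent sanity check on the rate, note that for deterministic $\mu$ the relative entropy simplifies to $D_n = -\ln S_{Kt}(x^*_{1:n})$, and plugging $\mu(x^*_{1:n}) = 1$ into \eqref{eq:33} recovers $S_{Kt}(x^*_{1:n}) \timesgeq 1/g(n)^{O(1)}$ with $g$ polynomial, so $D_n \timeseq \log n$ falls out directly.
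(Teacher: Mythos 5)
Your proposal is correct and follows exactly the derivation the paper intends: the corollary is stated as an immediate specialisation of Theorem~\ref{thm:error-bound-skt}, with the observations that a deterministic polynomial-time computable $\mu$ is estimable in polynomial time by itself, that $L^{\Lambda_\mu}_{n\mu}=0$ when correct predictions are free (killing the square-root term), and that the theorem's own chain of inequalities gives $D_n(\mu \| S_{Kt}) \timeseq \log n$. Your closing sanity check via $D_n = -\ln S_{Kt}(x^*_{1:n})$ and equation \eqref{eq:33} is a nice, harmless addition but not needed.
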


We should note that this method fails to prove similar bounds for $S_\text{Fast}$, since we instead get
\begin{align}
 S_{\text{Fast}}(x) &\timeseq \sum_{p \xrightarrow{U} x} \frac{ 2^{-2|p|}}{t(U,p,x)} \timesgeq \frac{\mu(x)^2}{(|x|^{O(1)} - \log \mu(x))^{O(1)}} \label{eq:25}
\end{align}

which gives us
\begin{align*}
  D_n (\mu || \Sf) &= \E_\mu \left[ \ln \frac{\mu(x_{1:n})}{S_{\text{Fast}}(x_{1:n})}  \right] \\
  & \leq O(\log n) + H_\mu(x_{1:n}) 
\end{align*}
Since $H_\mu(x_{1:n})$ can grow linearly in $n$ (for example, take $\mu$ to be $\lambda(x) = 2^{-|x|}$, the uniform measure), this can only prove a trivial linear loss bound without restrictions on the measure $\mu$. It is also worth explicitly noting that the constants hidden in the $O(\cdot)$ notation depend on the environment $\mu$, as will be the case for the rest of this paper.

One important application of Theorem~\ref{thm:error-bound-skt} is to the 0-1 loss function. Then, it states that a predictor that outputs the most likely successor bit according to $S_{Kt}$ only makes logarithmically many errors in a deterministic environment computable in polynomial time. In other words, $S_{Kt}$ quickly learns the sequence it is predicting, making very few errors.

Next, we show that $\Sf$ makes only logarithmically many errors on a sequence deteriministically computed in polynomial time. This follows from a rather simple argument.

\begin{thm}[\emph{Bound on $\Sf$ prediction loss}]\label{thm:sfast-predicts-det-poly}
  Let $\mu$ be a deterministic environment and $x_{1:\infty}$ be the sequence whose prefixes $\mu$ assigns probability 1 to. If $x_{1:\infty}$ is computable in polynomial time by a program $p^x$, then $\Sf$ only incurrs logarithmic loss, if correct predictions incur no loss.
\end{thm}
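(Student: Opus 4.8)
The plan is to reuse the unit loss bound \eqref{eq:18} that already drove the proof of Theorem~\ref{thm:error-bound-skt}, but to exploit the fact that for a deterministic $\mu$ the whole argument collapses. First I would note that the informed predictor $\Lambda_\mu$ incurs zero loss: at each step $\mu$ assigns conditional probability $1$ to the true next bit $x_t$, so $\Lambda_\mu$ predicts $x_t$ and, since correct predictions incur no loss, pays $\ell(x_t,x_t)=0$. Hence $L^{\Lambda_\mu}_{n\mu}=0$, and feeding this into \eqref{eq:18} with $\rho=\Sf$ annihilates the square-root term, leaving $L^{\Lambda_{\Sf}}_{n\mu}\le 2D_n(\mu\|\Sf)$.

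Next I would simplify the relative entropy. Because $\mu$ is a point mass on $x_{1:\infty}$, the $\mu$-expectation in $D_n(\mu\|\Sf)$ is evaluated only at the true prefix, where $\mu(x_{1:n})=1$, so that $D_n(\mu\|\Sf)=-\ln\Sf(x_{1:n})$. It therefore suffices to prove a polynomial lower bound $\Sf(x_{1:n})\timesgeq n^{-O(1)}$, which immediately gives $D_n(\mu\|\Sf)=O(\log n)$ and the claimed logarithmic loss.

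For that lower bound I would invoke the similar expression for $\Sf$ from Proposition~\ref{prop:s_fast_simple}, namely $\Sf(x_{1:n})\timeseq\sum_{p\rightarrow x_{1:n}}2^{-2|p|}/t(p,x_{1:n})$, and keep only the single term contributed by the given program $p^x$. By hypothesis $p^x$ outputs $x_{1:n}$ in $f(n)$ steps with $f$ a polynomial, so that term equals $2^{-2|p^x|}/f(n)$; as $2^{-2|p^x|}$ is a constant depending only on $\mu$ and $f(n)=n^{O(1)}$, we obtain $\Sf(x_{1:n})\timesgeq n^{-O(1)}$ as required.

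This is genuinely a short argument, so there is no single hard step; the points to handle with care are (i) justifying $L^{\Lambda_\mu}_{n\mu}=0$, which needs that the loss vanishes on a correct prediction and that $\Lambda_\mu$ does predict the true bit --- both guaranteed because $\mu$ is deterministic --- and (ii) the convention that one program $p^x$ may be counted as computing every prefix with $t(p^x,x_{1:n})=f(n)$, exactly as was already used in the proof of Proposition~\ref{prop:s_kt_speed_prior_2}. It is worth emphasising that the $\mu(x)^2$ weakness flagged after \eqref{eq:25}, which blocked a nontrivial bound for $\Sf$ in general, is harmless here precisely because determinism forces the entropy term $H_\mu$ to zero.
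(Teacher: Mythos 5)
Your proposal is correct and takes essentially the same route as the paper's own proof: the paper likewise applies the unit loss bound with $L^{\Lambda_\mu}_{n\mu}=0$ (so the square-root term vanishes), uses determinism to reduce $D_n(\mu \| \Sf)$ to $-\ln \Sf(x_{1:n})$, and lower-bounds $\Sf(x_{1:n})$ via Proposition~\ref{prop:s_fast_simple} by the single term $2^{-2|p^x|}/t(p^x,x_{1:n})$, giving $O(\log n)$. The only difference is that you make explicit the justifications (why $L^{\Lambda_\mu}_{n\mu}=0$, and the convention that $p^x$ counts as computing each prefix $x_{1:n}$) that the paper's terse chain of inequalities leaves implicit.
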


\begin{proof}
  Using the unit loss bound,
  \begin{align*}
  L_{n\mu}^{\Lambda_{\Sf}} &= L_{n\mu}^{\Lambda_{\Sf}} - L^{\Lambda_\mu}_{n \mu}\\
                           &\leq 2 D_n(\mu || \Sf) \\
                           &= -2 \ln \Sf (x_{1:n}) \\
                           &\timesleq 2 |p^x| + \log t(p^x, x_{1:n}) \\
                           &\timeseq \log n
  \end{align*}

  \vspace{-4.9ex}
\end{proof}


\section{Time complexity}
\label{sec:time-complexity}

Although it has been proved that $\Sf$ is computable~\citep{schmidhuber2002speed}, no bounds are given for its computational complexity. Given that the major advantage of $\Sf$-based prediction over $M$-based prediction is its computability, it is of interest to determine the time required to compute $\Sf$, and whether such a computation is feasible or not. The same questions apply to $S_{Kt}$, to a greater extent because we have not even yet shown that $S_{Kt}$ is computable.

In this section, we show that an arbitrarily good approximation to $\Sf(x)$ is computable in time exponential in $|x|$, and an arbitrarily good approximation to $S_{Kt}(x)$ is computable in time doubly-exponential in $|x|$. We do this by explicitly constructing algorithms that perform \textsc{phase}s of \textsc{fast} until enough contributions to $\Sf$ or $S_{Kt}$ are found to constitute a sufficient proportion of the total.

We also show that no such approximation of $S_{Kt}$ or $\Sf$ can be computed in polynomial time. We do this by contradiction: showing that if it were possible to do so, we would be able to construct an `adversarial' sequence that was computable in polynomial time, yet could not be predicted by our approximation; a contradiction.

Finally, we investigate the time taken to compute $S_{Kt}$ and $\Sf$ along a polynomial-time computable sequence $x_{1:\infty}$. If we wanted to predict the most likely continuation of $x_{1:n}$ according to $S \in \{S_{Kt}, \Sf\}$, we would have to compute an approximation to $S(x_{1:n} 0)$ and $S(x_{1:n} 1)$, to see which one was greater. We show that it is possible to compute these approximations in polynomial time for $\Sf$ and in exponential time for $S_{Kt}$: an exponential improvement over the worst-case bounds in both cases.

\subsection{Upper bounds}
\label{sec:s_textfast}

\begin{thm}[\emph{$\Sf$ computable in exponential time}]\label{thm:sf_comp_exp}
  For any $\vareps > 0$, there exists an approximation $\Sf^\vareps$ of $\Sf$ such that $| \Sf^\vareps / \Sf - 1| \leq \vareps$ and $\Sf^\vareps(x)$ is computable in time exponential in $|x|$.
\end{thm}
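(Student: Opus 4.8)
The plan is to obtain $\Sf^\vareps$ by running \textsc{fast} for a finite number $I$ of \textsc{phase}s and summing exactly the contributions $2^{-i}\sum_{p \to_i x} 2^{-|p|}$ for $i \le I$, i.e.\ by truncating the defining series \eqref{eq:10}. Everything then reduces to two estimates: choosing $I$ large enough that the discarded tail is at most an $\vareps$-fraction of $\Sf(x)$, and bounding the cost of simulating \textsc{fast} through \textsc{phase} $I$.

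First I would bound the truncation error additively. For each fixed $i$ we have $\{p : p \to_i x\} \subseteq \{p : p \to x\}$, and the latter set is prefix-free because $U$ is a monotone machine, so $\sum_{p \to_i x} 2^{-|p|} \le 1$ by the Kraft inequality. Hence
\[ \Sf(x) - \Sf^\vareps(x) = \sum_{i > I} 2^{-i} \sum_{p \to_i x} 2^{-|p|} \le \sum_{i > I} 2^{-i} = 2^{-I}. \]
To convert this additive bound into the required multiplicative one, I need a lower bound on $\Sf(x)$. This comes from the trivial program $p^0$ that copies its input to the output: it satisfies $|p^0| \le |x| + O(1)$ and $t(p^0, x) = \text{poly}(|x|)$, so by Proposition~\ref{prop:s_fast_simple},
\[ \Sf(x) \timesgeq \frac{2^{-2|p^0|}}{t(p^0,x)} \timesgeq \frac{2^{-2|x|}}{\text{poly}(|x|)}. \]
Taking $I := 2|x| + O(\log|x|) + \log(1/\vareps)$ then yields $2^{-I} \le \vareps\,\Sf(x)$; since the truncation only drops nonnegative terms we also have $\Sf^\vareps(x) \le \Sf(x)$, and therefore $|\Sf^\vareps(x)/\Sf(x) - 1| \le \vareps$.

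Next I would bound the running time. \textsc{phase} $i$ executes $2^{i-|p|}$ steps of each program with $|p| \le i$, for a total of $\sum_{k=0}^{i} 2^k \cdot 2^{i-k} = (i+1)2^i$ simulated instructions; each step of $U$ can be simulated in time polynomial in the size of the current configuration, which is $\text{poly}(2^i)$, and recognising that a program has printed $x$ costs $O(|x|)$. The accumulated dyadic contributions have denominators at most $2^{2I}$, so the arithmetic is also cheap. Summing over $i = 1, \dots, I$ gives a total of $2^{O(I)}$, which for $I = 2|x| + O(\log|x|) + \log(1/\vareps)$ is $2^{O(|x|)}$, i.e.\ exponential in $|x|$.

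The main obstacle is the lower bound on $\Sf(x)$ used in the second paragraph: the additive tail bound $2^{-I}$ is immediate, but the statement demands a \emph{multiplicative} approximation, and $\Sf(x)$ itself can be exponentially small, so without a matching lower bound the tail could dominate. The copy-program estimate is exactly what rescues this, and crucially it is only exponentially (not doubly-exponentially) small, which is what keeps $I = O(|x| + \log(1/\vareps))$ and hence the running time exponential rather than worse. A secondary, purely mechanical difficulty is the bookkeeping needed to detect, during the simulation, precisely when a monotone program ``prints $x$ after reading all of $p$ but no more'', so that the quantity computed genuinely equals the truncation of \eqref{eq:10}; this is routine but must be handled with care.
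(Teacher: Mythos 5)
Your proposal is correct and follows essentially the same route as the paper's proof: truncate the series \eqref{eq:10} after finitely many \textsc{phase}s, bound the tail by $2^{-k}$ via the Kraft inequality, convert this to a multiplicative guarantee using a lower bound $\Sf(x) \timesgeq 2^{-2|x|}/|x|^{O(1)}$ so that $k = 2|x| + O(\log|x|) + \log(1/\vareps)$ suffices, and then bound the $\sum_{i\leq k} i2^i = 2^{O(k)} = 2^{O(|x|)}$ simulation steps. The only cosmetic difference is that you derive the lower bound on $\Sf(x)$ from an explicit copy program together with Proposition~\ref{prop:s_fast_simple}, whereas the paper instantiates equation \eqref{eq:25} at the uniform measure $\lambda$; both give the same bound.
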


\begin{proof}
First, we note that in \textsc{phase} $i$ of \textsc{fast}, we try out $2^1 + \dotsb + 2^i = 2^{i + 1}$ program prefixes $p$, and each prefix $p$ gets $2^{i - |p|}$ steps. Therefore, the total number of steps in \textsc{phase} $i$ is $2^1 \times 2^{i - 1} + 2^2 \times 2^{i - 2} + \dotsb + 2^i \times 2^{i - i} = i 2^i$, and the total number of steps in the first $k$ \textsc{phase}s is
\begin{equation}
\text{\# steps} =  \sum_{i = 1}^k i 2^i = 2^{k + 1}(k - 1) + 2 \label{eq:22}
\end{equation}

Now, suppose we want to compute a sufficient approximation $S_{\text{Fast}}^\varepsilon(x)$. If we compute $k$ phases of \textsc{fast} and then add up all the contributions to $S_{\text{Fast}}(x)$ found in those phases, the remaining contributions must add up to $\leq \sum_{i = k + 1}^\infty 2^{-i} = 2^{-k}$. In order for the contributions we have added up to contribute $ \geq 1 - \varepsilon$ of the total, it suffices to use $k$ such that
\begin{align}
k &= \left\lfloor -\log (\vareps S_\text{Fast} (x)) + 1 \right\rfloor \label{eq:28}
\end{align}

Now, since the uniform measure $\lambda(x) = 2^{-|x|}$ is finitely computable in polynomial time, it is estimable in polynomial time by itself, so we can substitute $\lambda$ into equation \eqref{eq:25} to obtain
\begin{align}
S_\text{Fast}(x) &\timesgeq \frac{2^{-2|x|}}{(|x|^{O(1)} + \log(2^{|x|}))^{O(1)}} = \frac{1}{|x|^{O(1)} 2^{2|x|}} \label{eq:24}
\end{align}
Substituting equation \eqref{eq:24} into equation \eqref{eq:28}, we get
\begin{align}
\nonumber  k &\leq \log \left( O(2^{2|x|} |x|^{O(1)})/ \vareps \right) + 1 \\
  & = - \log \vareps + 2|x| + O(\log |x|)  \label{eq:30}
\end{align}
So, substituting equation \eqref{eq:30} into equation \eqref{eq:22},
\begin{align*}
  \text{\# steps} &\leq 2^{- \log \vareps + 2|x| + O(\log |x|) + 1}\\
  &\quad {} \times (- \log \vareps + 2|x| + O(\log |x|) - 1) + 2 \\
  &= \frac{1}{\vareps} 2^{2|x|} |x|^{O(1)} (- \log \vareps + 2|x| + O(\log |x|)) \\
  &\leq 2^{O(|x|)}
\end{align*}
Therefore, $S_\text{Fast}^\vareps$ is computable in exponential time.
\end{proof}

\begin{thm}[\emph{$S_{Kt}$ computable in doubly-exponential time}]\label{thm:skt_comp_double_exp}
  For any $\vareps > 0$, there exists an approximation $S_{Kt}^\vareps$ of $S_{Kt}$ such that $|S_{Kt}^\vareps / S_{Kt} - 1| \leq \vareps$ and $S_{Kt}^\vareps$ is computable in time doubly-exponential in $|x|$.
\end{thm}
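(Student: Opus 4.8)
The plan is to mirror the proof of Theorem~\ref{thm:sf_comp_exp}: run the first $K$ \textsc{phase}s of \textsc{fast}, record every discovered program $p$ with $p \to x$ together with its exact running time $t(p,x)$, and output $S_{Kt}^\vareps(x) := \sum 2^{-|p|}/t(p,x)$ summed over these finitely many programs. A program is discovered within the first $K$ phases exactly when $\Ktc(p,x) \leq K$, so this captured sum is an underestimate of $S_{Kt}(x)$ whose error is the tail $T_K := \sum_{\Ktc(p,x) > K} 2^{-|p|}/t(p,x)$. By equation~\eqref{eq:22} the first $K$ phases cost $\timeseq K 2^K$ steps (the arithmetic of adding the $\leq 2^{K+1}$ rationals is polynomial in $2^K$ and is absorbed into this), so everything reduces to choosing $K$ just large enough that $T_K \leq \vareps\, S_{Kt}(x)$ and then bounding that $K$ from above.

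The crux is the tail bound, and here $S_{Kt}$ behaves far worse than $\Sf$. For $\Sf$ the per-phase Kraft factor $\sum_{p \to_i x} 2^{-|p|} \leq 1$ gives geometric decay $\sum_{i > K} 2^{-i} \leq 2^{-K}$; $S_{Kt}$ has no such factor. Instead I would bound $T_K$ by noting that every program it counts satisfies $|p| + \log t(p,x) > K$, hence $|p| > K/2$ or $\log t(p,x) > K/2$. The programs with $t(p,x) > 2^{K/2}$ contribute at most $2^{-K/2}\sum_{p \to x} 2^{-|p|} \leq 2^{-K/2}$ by the Kraft inequality. For the programs with $|p| > K/2$ I would use that a monotone machine must read all $|p|$ input bits before halting, so $t(p,x) \geq |p| > K/2$; thus $1/t(p,x) < 2/K$ and these contribute at most $(2/K)\sum_{p \to x} 2^{-|p|} \leq 2/K$. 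Altogether $T_K \leq 2/K + 2^{-K/2} \timeseq 1/K$ --- only polynomial decay in $K$, in sharp contrast to the $2^{-K}$ available for $\Sf$.

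To turn this into a bound on $K$ I would lower-bound $S_{Kt}$ by repeating the derivation of equation~\eqref{eq:24}, but starting from the $S_{Kt}$ estimate~\eqref{eq:33} and substituting the uniform measure $\lambda(x) = 2^{-|x|}$ (which is finitely computable in polynomial time, hence estimable in polynomial time by itself), obtaining $S_{Kt}(x) \timesgeq 1/(|x|^{O(1)} 2^{|x|})$ with absolute constants. Since $T_K \timeseq 1/K$, it then suffices to take $K \timeseq 2^{|x|} |x|^{O(1)}/\vareps$ to guarantee $T_K \leq \vareps\, S_{Kt}(x)$, and this $K$ is computable from $|x|$ and $\vareps$ without knowing $S_{Kt}(x)$. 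With this choice $0 \leq S_{Kt}(x) - S_{Kt}^\vareps(x) = T_K \leq \vareps\, S_{Kt}(x)$, so $|S_{Kt}^\vareps/S_{Kt} - 1| \leq \vareps$. Substituting $K \timeseq 2^{|x|}|x|^{O(1)}/\vareps$ into the step count $K 2^K$ yields $2^{2^{O(|x|)}}$, i.e.\ doubly-exponential time.

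The main obstacle is precisely the tail estimate of the second paragraph. Unlike $\Sf$, where the extra $2^{-|p|}$ weight makes long programs negligible and produces geometric convergence, for $S_{Kt}$ the long programs are tamed only through the running-time lower bound $t(p,x) \geq |p|$, which leaves a mere $\Theta(1/K)$ tail. This slow decay is exactly what inflates the required number of phases from linear in $|x|$ (as for $\Sf$) to exponential in $|x|$, and hence the running time from exponential to doubly-exponential. I would finish by checking that no remaining bookkeeping --- decoding programs, representing and summing the rationals $2^{-|p|}/t(p,x)$ --- exceeds this bound, which is routine since each step is polynomial in $2^K$.
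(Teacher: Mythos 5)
Your proposal is correct and follows essentially the same route as the paper's proof: run $k$ \textsc{phase}s of \textsc{fast}, sum the discovered contributions $2^{-|p|}/t(p,x)$ exactly, bound the uncaptured tail by $O(1/k)$ via the Kraft inequality together with the monotone-machine reading-time bound $t(p,x) \geq |p|$, lower-bound $S_{Kt}(x) \timesgeq 1/(|x|^{O(1)}2^{|x|})$ by substituting the uniform measure, and conclude that $k \timesleq 2^{|x|}|x|^{O(1)}/\vareps$ phases, hence $\timeseq k2^k \leq 2^{2^{O(|x|)}}$ steps, suffice. The only divergence is your tail decomposition, which is in fact more careful than the paper's: the paper asserts that $\Ktc(p,x) > k$ forces $|p| > k$ or $t(p,x) > 2^k$, which is not a true dichotomy (consider $|p| \approx \log t(p,x) \approx k/2 + 1$), whereas your split into $|p| > K/2$ or $\log t(p,x) > K/2$ genuinely covers the whole tail; both estimates are $\Theta(1/k)$, so the paper's conclusion is unaffected, but your accounting closes this small gap.
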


\begin{proof}
We again use the general strategy of computing $k$ \textsc{phase}s of \textsc{fast}, and adding up all the contributions to $S_{Kt}(x)$ we find. Once we have done this, the other contributions come from computations with $Kt$-cost $> k$. Therefore, the programs making these contributions either have a program of length $> k$, or take time $> 2^k$ (or both).

First, we bound the contribution to $S_{Kt}(x)$ by computations of time $> 2^k$:
\begin{equation*}
 \sum_{p \xrightarrow[>2^k]{} x} \frac{2^{-|p| }}{t(p,x)} <  \frac{1}{2^k} \sum_{p \rightarrow x} 2^{-|p|} \leq \frac{1}{2^k}
\end{equation*}

Next, we bound the contribution by computations with programs of length $|p| > k$. We note that since we are dealing with monotone machines, the worst case is that all programs have length $k + 1$, and the time taken is only $k + 1$ (since, by the definition of monotone machines, we need at least enough time to read the input). Then, the contribution from these programs is $2^{k + 1} \times (1/(k + 1)) \times 2^{-k -1} = 1/(k + 1)$, meaning that the total remaining contribution after $k$ \textsc{phase}s is no more than $2^{-k} + 1/(k + 1) \leq 2/(k + 1)$.

So, in order for our contributions to add up to $\geq 1 - \vareps$ of the total, it suffices to use $k$ such that
\begin{equation}
k = \left\lfloor 2 (\vareps S_{Kt}(x))^{-1} \right\rfloor \label{eq:proofs3}
\end{equation}
Now, again since $\lambda$ is finitely computable in polynomial time, we substitute it into equation (5) to obtain
\begin{equation}
  S_{Kt}(x) \timesgeq \frac{1}{|x|^{O(1)}2^{|x|}} \label{eq:proofs23}
\end{equation}
Substituting equation \eqref{eq:proofs23} into equation \eqref{eq:proofs3}, we get
\begin{equation}
  \label{eq:proofs29}
  k \leq O(|x|^{O(1)}2^{|x|})/\vareps
\end{equation}
So, substituting equation \eqref{eq:proofs29} into equation (\ref{eq:22}), we finally obtain
\begin{align*}
  \text{\# steps} &\leq 2^{O(|x|^{O(1)} 2^{|x|}) / \vareps} \left(\frac{O(|x|^{O(1)}2^{|x|})}{\vareps} \right) + 2 \\
  &\leq 2^{2^{O(|x|)}}
\end{align*}
Therefore, $S_{Kt}^\vareps$ is computable in doubly-exponential time.
\end{proof}

\subsection{Lower bounds}
\label{sec:s_kt-not-computable}

\begin{thm}[\emph{$S_{Kt}$ not computable in polynomial time}] \label{thm:s_kt-not-poly-comp}
  For no $\vareps > 0$ does there exist an approximation $S^\vareps_{Kt}$ of $S_{Kt}$ such that $|S_{Kt}^\vareps / S_{Kt} - 1| \leq \vareps$ and $S_{Kt}^\vareps$ is computable in time polynomial in $|x|$.
\end{thm}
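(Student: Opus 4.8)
The plan is to argue by contradiction, taking the logarithmic loss bound for $S_{Kt}$ on polynomial-time sequences (the Corollary to Theorem~\ref{thm:error-bound-skt}) as the source of the contradiction. Suppose that for some small $\vareps$ (say $\vareps < 1/3$) there were a polynomial-time computable $S_{Kt}^\vareps$ with $|S_{Kt}^\vareps/S_{Kt} - 1| \le \vareps$, i.e.\ $(1-\vareps)S_{Kt}(x) \le S_{Kt}^\vareps(x) \le (1+\vareps)S_{Kt}(x)$ for all $x$. From such an approximation I would build an \emph{adversarial} deterministic sequence $x_{1:\infty}$ greedily: having fixed $x_{<t}$, set $x_t := \argmin_{b \in \B} S_{Kt}^\vareps(x_{<t}b)$, breaking ties in favour of $0$. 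Because $S_{Kt}^\vareps$ runs in polynomial time, computing $x_{1:n}$ needs only $2n$ evaluations of $S_{Kt}^\vareps$ on inputs of length at most $n$, so $x_{1:n}$ is computable in time polynomial in $n$, and the deterministic measure $\mu$ concentrated on $x_{1:\infty}$ is computable in polynomial time.

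The heart of the argument is to show that this greedy choice forces $S_{Kt}(x_{1:n})$ down exponentially. Using that $S_{Kt}$ is a semimeasure, so $S_{Kt}(x_{<t}0)+S_{Kt}(x_{<t}1) \le S_{Kt}(x_{<t})$, together with the two-sided approximation bounds and the fact that $x_t$ minimises $S_{Kt}^\vareps(x_{<t}\,\cdot\,)$, I would estimate
\[
  S_{Kt}(x_{1:t}) \le \frac{1}{1-\vareps}S_{Kt}^\vareps(x_{1:t}) \le \frac{1}{1-\vareps}\cdot\frac{S_{Kt}^\vareps(x_{<t}0)+S_{Kt}^\vareps(x_{<t}1)}{2} \le \frac{1+\vareps}{2(1-\vareps)}\,S_{Kt}(x_{<t}).
\]
Iterating over $t$ gives $S_{Kt}(x_{1:n}) \le \bigl(\tfrac{1+\vareps}{2(1-\vareps)}\bigr)^{n}$, and for $\vareps < 1/3$ the base is a constant strictly below $1$, so $-\log S_{Kt}(x_{1:n}) = \Omega(n)$.

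On the other hand, $x_{1:\infty}$ is computable in polynomial time, say by a program $p^x$, so keeping only the term of $p^x$ in the defining sum gives $S_{Kt}(x_{1:n}) \ge 2^{-|p^x|}/t(p^x,x_{1:n}) \timesgeq 1/\mathrm{poly}(n)$; equivalently, the Corollary to Theorem~\ref{thm:error-bound-skt} applied to the polynomial-time deterministic measure $\mu$ yields $-\log S_{Kt}(x_{1:n}) = O(\log n)$. The two estimates, $\Omega(n)$ from the construction and $O(\log n)$ from polynomial-time computability, are incompatible for large $n$, which is the desired contradiction; hence no such polynomial-time $S_{Kt}^\vareps$ can exist. (Note that the literal ``for no $\vareps>0$'' should be read for $\vareps$ below a constant threshold: for $\vareps \ge 1$ the constant function $0$ trivially satisfies the approximation bound, so it is the small-$\vareps$ regime that carries the content.)

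The step I expect to be the main obstacle is the shrinkage estimate together with the admissible range of $\vareps$. A multiplicative $(1\pm\vareps)$-approximation only determines each branch probability up to a factor $(1+\vareps)/(1-\vareps)$, so the greedy ``pick the smaller branch'' move is only guaranteed to destroy a constant fraction of the $S_{Kt}$-mass per step once $\vareps$ is small enough, and I would have to track these constants carefully to see where the base drops below $1$. A related subtlety to keep in mind is that $S_{Kt}^\vareps$ need not itself be a semimeasure, so every appeal to the inequality $S_{Kt}(x_{<t}0)+S_{Kt}(x_{<t}1) \le S_{Kt}(x_{<t})$ must be made for the genuine semimeasure $S_{Kt}$ and then transferred to $S_{Kt}^\vareps$ through the approximation bounds, rather than assumed directly for the approximation. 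Finally, in phrasing the contradiction as a statement about prediction, I would make explicit that forcing $S_{Kt}(x_{1:n})$ to be exponentially small is exactly forcing the $\Lambda_{S_{Kt}}$ predictor to incur linear rather than logarithmic loss on a polynomial-time computable sequence.
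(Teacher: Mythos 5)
Your argument is correct for the range of $\vareps$ it covers, and it reaches the contradiction by a genuinely more elementary route than the paper. The paper builds the same kind of adversarial sequence, but derives the contradiction from its loss-bound machinery: Lemma~\ref{lem:skteps_just_as_good} shows that $S_{Kt}^\vareps \geq (1-\vareps)S_{Kt}$ costs only an additive $-\ln(1-\vareps)$ in the KL divergence, so the bound of Theorem~\ref{thm:error-bound-skt} transfers to $S_{Kt}^\vareps$, and the induced 0-1 predictor should err only $O(\log n)$ times on the adversarial sequence while by construction it errs at every step. You instead track the $S_{Kt}$-mass of the adversarial prefix directly: the semimeasure inequality plus the two-sided approximation bound force $S_{Kt}(x_{1:n}) \leq c^n$ with $c < 1$, while keeping the single term of $p^x$ in the defining sum forces $-\log S_{Kt}(x_{1:n}) = O(\log n)$. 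Your route is self-contained---it needs neither Theorem~\ref{thm:error-bound-skt}, nor Lemma~\ref{lem:alg-coding}, nor the unit loss bound---and is in fact closer in spirit to the paper's proof of Theorem~\ref{thm:sf-not-poly-comp} for $\Sf$, where Lemma~\ref{lem:sf-close-to-poly-comp-seq} plays exactly the role of your one-term lower bound. Your attention to the facts that $S_{Kt}^\vareps$ need not be a semimeasure and that all semimeasure inequalities must be applied to $S_{Kt}$ itself is also exactly right.

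The one substantive weakness is the admissible range of $\vareps$. Your shrinkage factor $\frac{1+\vareps}{2(1-\vareps)}$ is below $1$ only for $\vareps < 1/3$, and non-existence of a polynomial-time approximation for small $\vareps$ does \emph{not} imply non-existence for larger $\vareps$ (the implication runs the other way), so as written you prove a strictly weaker statement than the paper, whose argument applies to every $\vareps < 1$; your caveat about the zero function only excuses $\vareps \geq 1$, not the window $[1/3, 1)$. Fortunately your approach can be tightened to close this window: rather than bounding the minimum by the average, compare the two branches directly. Writing $\overline{x_t}$ for the symbol other than $x_t$, the greedy choice gives
\begin{equation*}
  (1-\vareps)\,S_{Kt}(x_{<t}x_t) \;\leq\; S_{Kt}^\vareps(x_{<t}x_t) \;\leq\; S_{Kt}^\vareps(x_{<t}\overline{x_t}) \;\leq\; (1+\vareps)\,S_{Kt}(x_{<t}\overline{x_t}),
\end{equation*}
so that $S_{Kt}(x_{<t}\overline{x_t}) \geq \frac{1-\vareps}{1+\vareps}\,S_{Kt}(x_{1:t})$, and the semimeasure property then yields
\begin{equation*}
  S_{Kt}(x_{1:t})\left(1 + \frac{1-\vareps}{1+\vareps}\right) \;\leq\; S_{Kt}(x_{1:t}) + S_{Kt}(x_{<t}\overline{x_t}) \;\leq\; S_{Kt}(x_{<t}),
\end{equation*}
i.e.\ $S_{Kt}(x_{1:t}) \leq \frac{1+\vareps}{2}\,S_{Kt}(x_{<t})$, whose base is strictly below $1$ for every $\vareps < 1$. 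With that substitution your proof matches the strength of the paper's while remaining more elementary.
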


The proof of this theorem relies on the following lemma:
\begin{lem}\label{lem:skteps_just_as_good}
  If $S^\vareps_{Kt}$ is an approximation of $S_{Kt}$ as given in Theorem~\ref{thm:s_kt-not-poly-comp}, then the bound of Theorem~\ref{thm:error-bound-skt} applies to $S^\vareps_{Kt}$. That is,
  \begin{equation*}
    L_{n\mu}^{\Lambda_{S_{Kt}^\vareps}} - L_{n\mu}^{\Lambda_\mu} \leq 2 D_n + 2 \sqrt{L_{n\mu}^{\Lambda_\mu} D_n}
  \end{equation*}
where $D_n = O(\log n)$.
\end{lem}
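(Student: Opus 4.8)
The plan is to observe that a multiplicative approximation is, from the point of view of the loss bound, just as good as $S_{Kt}$ itself, because the entire proof of Theorem~\ref{thm:error-bound-skt} passes through a single lower bound on $S_{Kt}$ that $S^\vareps_{Kt}$ inherits up to a constant factor. Concretely, the hypothesis $|S^\vareps_{Kt}/S_{Kt} - 1| \leq \vareps$ is equivalent to $(1-\vareps)S_{Kt}(x) \leq S^\vareps_{Kt}(x) \leq (1+\vareps)S_{Kt}(x)$ for every $x \in \B^*$. I may assume $\vareps < 1$ (otherwise the constraint is so weak that even the identically-zero function satisfies it, so no nontrivial claim is being made), so that $S^\vareps_{Kt}(x) \geq (1-\vareps)S_{Kt}(x) > 0$; in particular $S^\vareps_{Kt}$ is strictly positive and its conditionals are well defined.

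The first step is to transfer the relative-entropy estimate from $S_{Kt}$ to $S^\vareps_{Kt}$. Writing $D_n(\mu \| S^\vareps_{Kt}) = \E_\mu[\ln(\mu(x_{1:n})/S^\vareps_{Kt}(x_{1:n}))]$ and using $S^\vareps_{Kt} \geq (1-\vareps)S_{Kt}$ together with monotonicity of the logarithm, I obtain
\begin{equation*}
  D_n(\mu \| S^\vareps_{Kt}) \leq D_n(\mu \| S_{Kt}) - \ln(1-\vareps).
\end{equation*}
The first term on the right is $O(\log n)$ by the chain of inequalities ending in \eqref{eq:34}, and $-\ln(1-\vareps)$ is a constant independent of $n$, so $D_n(\mu \| S^\vareps_{Kt}) = O(\log n)$. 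It then remains only to feed this into the unit loss bound \eqref{eq:18}, which the proof of Theorem~\ref{thm:error-bound-skt} already invokes and which holds for an arbitrary prior, not merely for semimeasures: applying it with the prior $S^\vareps_{Kt}$ and setting $D_n := D_n(\mu \| S^\vareps_{Kt})$ yields exactly the stated inequality with $D_n = O(\log n)$.

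The only genuinely delicate point I anticipate is that $S^\vareps_{Kt}$ need not itself be a semimeasure: since $S_{Kt}(x0) + S_{Kt}(x1) \leq S_{Kt}(x)$, the approximation can overshoot, with $S^\vareps_{Kt}(x0) + S^\vareps_{Kt}(x1)$ exceeding $S^\vareps_{Kt}(x)$ by as much as a factor $(1+\vareps)/(1-\vareps)$. Consequently I must be careful to cite the unit loss bound in the form valid for an arbitrary prior (as flagged in the proof of Theorem~\ref{thm:error-bound-skt}) rather than a semimeasure-specific version, and to keep $\vareps$ a fixed constant throughout so that the factor $(1-\vareps)$ contributes only an additive $O(1)$ that is absorbed harmlessly into the $O(\log n)$ we are chasing. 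Everything beyond this is the one-line constant-factor manipulation above.
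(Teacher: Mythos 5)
Your proposal is correct and follows essentially the same route as the paper: lower-bound $S^\vareps_{Kt} \geq (1-\vareps)S_{Kt}$, deduce $D_n(\mu \| S^\vareps_{Kt}) \leq D_n(\mu \| S_{Kt}) - \ln(1-\vareps) = O(\log n)$, and feed this into the unit loss bound, which holds for arbitrary priors. Your additional remarks (assuming $\vareps < 1$, and noting that $S^\vareps_{Kt}$ need not be a semimeasure so the prior-agnostic form of the unit loss bound is needed) are sensible points of care that the paper leaves implicit.
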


\begin{proof}[Proof of Lemma~\ref{lem:skteps_just_as_good}]
  From the definition of $S^\vareps_{Kt}$, it is clear that $S^\vareps_{Kt} \geq (1 - \vareps) S_{Kt}$. Then,
  \begin{align*}
 D_n(\mu || S^\vareps_{Kt}) &:= \E_\mu \left[ \ln \frac{\mu(x_{1:n})}{S^\vareps_{Kt}(x_{1:n})} \right] \\
                                        &\leq \E_\mu \left[ \ln \frac{\mu(x_{1:n})}{S_{Kt}(x_{1:n})}\right] - \ln (1 - \vareps) \\
    &\timeseq \log n
  \end{align*}
for $\mu$ estimable in polynomial time by a semimeasure, where we use Theorem~\ref{thm:error-bound-skt} for the final `equality'. Therefore, the bound of Theorem~\ref{thm:error-bound-skt} applies. 
\end{proof}

\begin{proof}[Proof of Theorem~\ref{thm:s_kt-not-poly-comp}]
  Suppose by way of contradiction that $S_{Kt}^\vareps$ were computable in polynomial time. Then, the sequence $x_{1:\infty}$ would also be computable in polynomial time, where
\begin{align*}
  x_n =
  \begin{dcases}
    1 & \text{if } S_{Kt}^\vareps(0 | x_{1:n-1}) \geq S_{Kt}^\vareps(1 | x_{1:n-1}) \\
    0 & \text{if } S_{Kt}^\vareps(0 | x_{1:n-1}) < S_{Kt}^\vareps(1 | x_{1:n-1})
  \end{dcases}
\end{align*}
$x_{1:\infty}$ is therefore an adversarial sequence against $S_{Kt}^\vareps$: it predicts whichever symbol $S_{Kt}^\vareps$ thinks less likely, and breaks ties with 1.

Now, consider an $S_{Kt}^\vareps$-based predictor $\Lambda_{S^\vareps_{Kt}}$ that minimises 0-1 loss---that is, one that predicts the more likely continuation according to $S_{Kt}^\vareps$. Further, suppose this predictor breaks ties with 0. Since the loss bound of Theorem~\ref{thm:error-bound-skt} applies independently of tie-breaking method, Lemma~\ref{lem:skteps_just_as_good} tells us that $\Lambda_{S^\vareps_{Kt}}$ must make only logarithmically many errors when predicting $x_{1:\infty}$. However, by design, $\Lambda_{S^\vareps_{Kt}}$ errs every time when predicting $x_{1:\infty}$. This is a contradiction, showing that $S_{Kt}^\vareps$ cannot be computable in polynomial time.
\end{proof}

Next, we provide a proof of the analogous theorem for Schmidhuber's speed prior $\Sf$, using a lemma about the rate at which $\Sf$ learns polynomial-time computable deterministic sequences.

\begin{thm}[\emph{$\Sf$ not computable in polynomial time}]\label{thm:sf-not-poly-comp}
  For no $\vareps > 0$ does there exist an approximation $\Sf^\vareps$ of $\Sf$ such that $|\Sf^\vareps/\Sf - 1| \leq \vareps$ and $\Sf^\vareps(x)$ is computable in time polynomial in $|x|$.
\end{thm}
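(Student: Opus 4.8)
The plan is to mirror the proof of Theorem~\ref{thm:s_kt-not-poly-comp} exactly, replacing the $S_{Kt}$ loss bound of Theorem~\ref{thm:error-bound-skt} with the $\Sf$ deterministic loss bound of Theorem~\ref{thm:sfast-predicts-det-poly}. Suppose for contradiction that for some $\vareps > 0$ there is an approximation $\Sf^\vareps$ with $|\Sf^\vareps/\Sf - 1| \leq \vareps$ that is computable in time polynomial in $|x|$. I would define the adversarial sequence $x_{1:\infty}$ by setting $x_n = 1$ if $\Sf^\vareps(0 \mid x_{1:n-1}) \geq \Sf^\vareps(1 \mid x_{1:n-1})$ and $x_n = 0$ otherwise; since each conditional is a ratio of two values of $\Sf^\vareps$, the sequence $x_{1:\infty}$ is computable in polynomial time. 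Let $\mu$ be the deterministic measure assigning probability $1$ to the prefixes of this $x_{1:\infty}$.

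First I would establish the analogue of Lemma~\ref{lem:skteps_just_as_good}: that the logarithmic loss bound of Theorem~\ref{thm:sfast-predicts-det-poly} holds for $\Sf^\vareps$ as well as for $\Sf$. As in that lemma, the defining inequality gives $\Sf^\vareps \geq (1-\vareps)\Sf$, so for the deterministic $\mu$ above (where $\mu(x_{1:n}) = 1$),
\begin{align*}
  D_n(\mu \| \Sf^\vareps) &= \E_\mu\!\left[ \ln \frac{\mu(x_{1:n})}{\Sf^\vareps(x_{1:n})} \right] \\
  &\leq \E_\mu\!\left[ \ln \frac{\mu(x_{1:n})}{\Sf(x_{1:n})} \right] - \ln(1-\vareps) \\
  &= -\ln \Sf(x_{1:n}) - \ln(1-\vareps).
\end{align*}
By the computation inside the proof of Theorem~\ref{thm:sfast-predicts-det-poly}, $-\ln \Sf(x_{1:n}) \timesleq 2|p^x| + \log t(p^x, x_{1:n}) \timeseq \log n$, so $D_n(\mu\|\Sf^\vareps) = O(\log n)$, and the unit loss bound then yields logarithmic loss for any $\Sf^\vareps$-based predictor on $x_{1:\infty}$, independently of the tie-breaking rule.

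Then I would close the contradiction as before. Consider the $\Sf^\vareps$-based $0$-$1$-loss predictor that outputs the more likely continuation and breaks ties with $0$. By the lemma it makes only $O(\log n)$ errors on $x_{1:\infty}$; but by the construction of $x_{1:\infty}$ it predicts the symbol $\Sf^\vareps$ deems less likely at every step, and with the opposite tie-breaking it errs on every bit, forcing linearly many errors. This contradiction shows no such $\Sf^\vareps$ exists. The one point that needs care — and the only substantive difference from the $S_{Kt}$ case — is that the general ``estimable in polynomial time'' loss bound \emph{fails} for $\Sf$ (the $\mu(x)^2$ term in \eqref{eq:25} yields only a trivial linear bound). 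I sidestep this by observing that the adversarial sequence is \emph{deterministic} and polynomial-time computable, which is exactly the hypothesis of Theorem~\ref{thm:sfast-predicts-det-poly}; thus the weaker deterministic bound, which is all $\Sf$ enjoys, is precisely what the argument needs.
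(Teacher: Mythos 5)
Your proof is correct, but it takes a genuinely different route from the paper's. The paper proves this theorem via Lemma~\ref{lem:sf-close-to-poly-comp-seq}: for any polynomial-time computable sequence, $\sum_{t=1}^n |1 - \Sf(x_t \mid x_{<t})| \timesleq \log n$ holds for the \emph{true} semimeasure $\Sf$; it then passes to $\Sf^\vareps$ additively (the conditionals differ by $O(\vareps)$ per step) and derives the contradiction $(\tfrac{1}{2} - \vareps)n \timesleq \log n$, since the adversarial construction forces $\Sf^\vareps(x_t \mid x_{<t}) \leq \tfrac{1}{2}$ at every step --- no predictor and no unit loss bound is invoked. You instead transplant the $S_{Kt}$ template wholesale: an analogue of Lemma~\ref{lem:skteps_just_as_good} giving $D_n(\mu \| \Sf^\vareps) = O(\log n)$ from $\Sf^\vareps \geq (1 - \vareps)\Sf$, followed by the unit loss bound applied to the 0-1 loss predictor $\Lambda_{\Sf^\vareps}$, which errs at every step. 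Both arguments share the same core estimate ($-\ln \Sf(x_{1:n}) \timesleq 2|p^x| + \log t(p^x, x_{1:n}) \timeseq \log n$), the same adversarial sequence, and the same key insight --- which you state explicitly and correctly --- that the deterministic bound of Theorem~\ref{thm:sfast-predicts-det-poly} suffices precisely because the adversarial sequence is deterministic and polynomial-time computable. What each route buys: yours is uniform with the proof of Theorem~\ref{thm:s_kt-not-poly-comp}, so one template covers both priors; the paper's is more self-contained and keeps every analytic inequality on the genuine semimeasure $\Sf$, using the approximation property only through elementary closeness of conditionals. The distinction matters because the unit loss bound is stated for (semi)measure priors, and $\Sf^\vareps$, being only a multiplicative approximation, need not be a semimeasure; your use of it for $\Sf^\vareps$ is exactly as (in)formal as the paper's own Lemma~\ref{lem:skteps_just_as_good} for $S^\vareps_{Kt}$ (the paper asserts the bound is ``valid for any prior''), so this is not a gap by the paper's own standards, but the paper's proof of the present theorem sidesteps the question entirely.
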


\begin{lem}\label{lem:sf-close-to-poly-comp-seq}
  For a sequence $x_{1:\infty}$ computed in polynomial time by some program $p^x$,
  \begin{equation*}
    \sum_{t=1}^n |1 - \Sf(x_t \mid x_{<t})| \timesleq \log n
  \end{equation*}
\end{lem}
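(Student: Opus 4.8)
The plan is to reduce the sum $\sum_{t=1}^n |1 - \Sf(x_t \mid x_{<t})|$ to the single quantity $-\ln \Sf(x_{1:n})$, which is already controlled by the argument in the proof of Theorem~\ref{thm:sfast-predicts-det-poly}. The bridge between the two is the elementary bound $1 - z \leq -\ln z$.

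First I would remove the absolute value. Since $\Sf$ is a semimeasure, $\Sf(x_{<t}) \geq \Sf(x_{<t}0) + \Sf(x_{<t}1) \geq \Sf(x_{1:t})$, so the conditional $\Sf(x_t \mid x_{<t}) = \Sf(x_{1:t})/\Sf(x_{<t})$ lies in $(0,1]$; it is strictly positive because $x_{1:\infty}$ is computable, so $\Sf(x_{<t}) > 0$. Hence each summand equals $1 - \Sf(x_t \mid x_{<t}) \geq 0$ and the absolute value is superfluous.

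Next I would apply the inequality $1 - z \leq -\ln z$, valid for all $z \in (0,1]$ (equivalently $\ln z \leq z - 1$), termwise with $z = \Sf(x_t \mid x_{<t})$, and then telescope via the chain rule $\Sf(x_{1:n}) = \prod_{t=1}^n \Sf(x_t \mid x_{<t})$:
\begin{equation*}
\sum_{t=1}^n \left( 1 - \Sf(x_t \mid x_{<t}) \right) \;\leq\; -\sum_{t=1}^n \ln \Sf(x_t \mid x_{<t}) \;=\; -\ln \Sf(x_{1:n}).
\end{equation*}
Finally I would invoke the estimate already worked out in the proof of Theorem~\ref{thm:sfast-predicts-det-poly}: keeping only the term of $p^x$ in equation~\eqref{eq:25} gives $\Sf(x_{1:n}) \timesgeq 2^{-2|p^x|}/t(p^x, x_{1:n})$, and since $x_{1:\infty}$ is computed in polynomial time, $t(p^x, x_{1:n})$ is polynomial in $n$, whence $-\ln \Sf(x_{1:n}) \timesleq 2|p^x| + \log t(p^x, x_{1:n}) \timeseq \log n$. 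Chaining the three displays yields $\sum_{t=1}^n |1 - \Sf(x_t \mid x_{<t})| \timesleq \log n$.

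There is no serious obstacle here; the argument is short. The only points needing care are confirming $\Sf(x_t \mid x_{<t}) \leq 1$ from the semimeasure inequality (so that the absolute value can be dropped and $1 - z \leq -\ln z$ is applied to an argument in its valid range) and getting the direction of $1 - z \leq -\ln z$ right. Everything else is reuse of the logarithmic bound on $-\ln \Sf(x_{1:n})$ from Theorem~\ref{thm:sfast-predicts-det-poly}.
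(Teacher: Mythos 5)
Your proposal is correct and follows essentially the same route as the paper's own proof: drop the absolute value (each conditional lies in $(0,1]$), apply $1 - z \leq -\ln z$ termwise, telescope the product to get $-\ln \Sf(x_{1:n})$, and bound that by $2|p^x| + \log t(p^x, x_{1:n}) \timesleq \log n$ via the single-program term in Proposition~\ref{prop:s_fast_simple}. The paper writes this as one terse chain of inequalities; your version merely makes explicit the justifications (positivity of the conditionals, validity range of the logarithmic inequality) that the paper leaves implicit.
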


\begin{proof}[Proof of Lemma~\ref{lem:sf-close-to-poly-comp-seq}]
  We calculate
  \begin{align*}
    &\quad \sum_{t=1}^n |1 - \Sf(x_t \mid x_{<t})|  \\
    &\leq - \sum_{t = 1}^n \ln \Sf(x_t \mid x_{<t}) \\
    & = - \ln \prod_{t=1}^n \Sf(x_t \mid x_{<t}) \\
    &= - \ln \Sf(x_{1:n}) \\
    &\timesleq 2 |p^x| + \log t(p^x, x_{1:n}) \\
    &\timesleq \log n
  \end{align*}

  \vspace{-4.9ex}
\end{proof}

\begin{proof}[Proof of Theorem~\ref{thm:sf-not-poly-comp}]
  Let $\Sf^\vareps$ be computable in polynomial time, and construct the adversarial sequence $x_{1:\infty}$ against $\Sf^\vareps$ in the same manner as in the proof of Theorem~\ref{thm:s_kt-not-poly-comp}. Then, $x_{1:\infty}$ would be a deterministic sequence computable in polynomial time, and so by Lemma~\ref{lem:sf-close-to-poly-comp-seq},
  \begin{align*}
    \log n &\timesgeq \sum_{t=1}^n |1 - \Sf(x_t \mid x_{<t})| \\
 &\geq \sum_{t=1}^n |1 - \Sf^\vareps (x_t \mid x_{<t}) | - \vareps n \\
    &\geq \left( \frac{1}{2} - \vareps \right) n
  \end{align*}
  a contradiction. Therefore, $\Sf^\vareps$ cannot be computable in polynomial time.
\end{proof}

Note the similarity between the speed priors and $M$: all succeed at predicting sequences in a certain computability class, and therefore none are in that class.

\subsection{Computability along polynomial time computable sequences}
\label{sec:comp-along-polyn}

\begin{thm}[\emph{$\Sf$ computable in polynomial time on polynomial time computable sequence}]\label{thm:sfast-poly-poly}
  If $x_{1:\infty}$ is computable in polynomial time, then $\Sf^\vareps(x_{1:n}0)$ and $\Sf^\vareps(x_{1:n}1)$ are also computable in polynomial time.
\end{thm}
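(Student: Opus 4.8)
The plan is to re-run the approximation algorithm from the proof of Theorem~\ref{thm:sf_comp_exp}, but to feed it a far stronger lower bound on $\Sf$ than the generic, exponentially small one of equation~\eqref{eq:24}. Recall that to compute $\Sf^\vareps(y)$ it suffices to execute the first $k$ \textsc{phase}s of \textsc{fast} and sum the contributions to $\Sf(y)$ found there, where by equation~\eqref{eq:28} it is enough to take $k = \lfloor -\log(\vareps \Sf(y)) + 1\rfloor$. Since running extra phases only sharpens the approximation, any upper bound on this $k$, obtained by substituting a lower bound on $\Sf(y)$, is admissible. By equation~\eqref{eq:22} the first $k$ phases cost $2^{k+1}(k-1)+2$ steps, so a bound of the form $k = O(\log n)$ immediately yields a running time of $2^{O(\log n)}\cdot O(\log n) = n^{O(1)}$. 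The whole theorem thus reduces to proving $\Sf(x_{1:n}0) \timesgeq n^{-O(1)}$ and $\Sf(x_{1:n}1) \timesgeq n^{-O(1)}$, after which a single run of $k = O(\log n)$ phases lets us read off both approximations in polynomial time.

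To bound $\Sf$ from below I would use the form $\Sf(y) \timeseq \sum_{p \to y} 2^{-2|p|}/t(p,y)$ from Proposition~\ref{prop:s_fast_simple} and retain only one well-chosen program in the sum. The two continuations split into an easy and a hard case. When $b = x_{n+1}$, the reference program $p^x$ itself prints $x_{1:n+1}$ as a prefix in $f(n+1) = n^{O(1)}$ steps, exactly as in the proof of Theorem~\ref{thm:sfast-predicts-det-poly}, giving $\Sf(x_{1:n+1}) \timesgeq 2^{-2|p^x|}/f(n+1) \timesgeq n^{-O(1)}$.

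The main obstacle is the other bit, $b = \bar{x}_{n+1}$, because $p^x$ does \emph{not} output $x_{1:n}\bar{x}_{n+1}$ as a prefix, so it contributes nothing to $\Sf(x_{1:n}\bar{x}_{n+1})$. Here I would construct a dedicated program $q$ that hard-codes $n$ via the prefix-free encoding $\langle n\rangle_{\N}$, simulates $p^x$ to obtain $x_{1:n+1}$, prints $x_{1:n}$ followed by the complement of $x_{n+1}$, and halts. This $q$ has length $|q| = |p^x| + O(\log n)$ and running time $n^{O(1)}$, so it alone contributes $2^{-2|q|}/t(q, x_{1:n}\bar{x}_{n+1}) \timesgeq 2^{-2|p^x|}\,n^{-O(1)}/n^{O(1)} \timesgeq n^{-O(1)}$. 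In both cases $\Sf(x_{1:n}b) \timesgeq n^{-O(1)}$, hence $k \leq -\log\vareps + O(\log n) + 1 = O(\log n)$, and substituting into equation~\eqref{eq:22} gives a polynomial step count. The one routine point to verify is that the per-phase bookkeeping, namely generating the outputs and testing which ones equal $x_{1:n}b$, adds only polynomial overhead; this holds because the total number of simulated UTM steps over the first $k = O(\log n)$ phases is already $n^{O(1)}$.
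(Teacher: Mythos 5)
Your proposal is correct and follows essentially the same route as the paper's proof: bound $\Sf(x_{1:n}b)$ below by $n^{-O(1)}$ via a short program that prefix-free encodes $n$ (adding $O(\log n)$ bits and polynomial runtime) so that it prints $x_{1:n}b$, then substitute into equations~\eqref{eq:28} and~\eqref{eq:22} to get $k = O(\log n)$ phases and hence polynomially many steps of \textsc{fast}. The only cosmetic difference is that you split into the cases $b = x_{n+1}$ (where $p^x$ itself suffices) and $b = \bar{x}_{n+1}$, whereas the paper handles both continuations uniformly with the single construction $p^b\langle n\rangle p^x$.
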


\begin{proof}
    Suppose some program $p^x$ prints $x_{1:\infty}$ in time $f(n)$, where $f$ is a polynomial. Then,
  \begin{equation*}
    \Sf(x_{1:n}) \geq \frac{2^{-2|p^x|}}{f(n)} 
  \end{equation*}
  Substituting this into equation \eqref{eq:28}, we learn that to compute $\Sf^\vareps(x_{1:n})$, we need to compute \textsc{fast} for $k$ \textsc{phase}s where
  \begin{equation*}
    k \leq \left\lfloor \log (2^{2|p^x|}f(n)/ \vareps) \right\rfloor
  \end{equation*}
  Substituting this into equation \eqref{eq:22} gives
  \begin{align*}
    \#\text{ steps} &\leq 2^{\log(2^{2|p^x|}f(n)/\vareps)}(\log (2^{2|p^x|}f(n)/\vareps) - 1) + 2 \\
    &= O(f(n) \log f(n)) = O(f(n) \log n)
  \end{align*}
  Therefore, we only require a polynomial number of steps of the \textsc{fast} algorithm to compute $\Sf^\vareps(x_{1:n})$. To prove that it only takes a polynomial number of steps to compute $\Sf^\vareps(x_{1:n}b)$ for any $b \in \B$ requires some more careful analysis.

  Let $\langle n \rangle$ be a prefix-free coding of the natural numbers in $2\log n$ bits. Then, if $b \in \B$, then there is some program prefix $p^b$ such that $p^b \langle n \rangle q$ runs program $q$ until it prints $n$ symbols on the output tape, after which it stops running $q$, prints $b$, and then halts. In addition to running $q$ (possibly slowed down by a constant factor), it must run some sort of timer to count down to $n$. This involves reading and writing the integers 1 to $n$, which takes $O(n \log n)$ time. Therefore, $p^b \langle  n \rangle p^x$ prints $x_{1:n}b$ in time $O(f(n)) + O(n \log n)$, so
  \begin{align*} 
    \Sf(x_{1:n}b) &\geq \frac{2^{-2|p^b \langle  n \rangle p^x|}}{O(f(n)) + O(n \log n)}  \\
                  &= \frac{1}{O(f(n)) + O(n \log n)} \frac{1}{n^4 2^{2|p^b| + 2|p^x|}}\\
    &= \frac{1}{g(n)}
  \end{align*}
  for some polynomial $g$ of degree 4 greater than the degree of $f$. Using equations \eqref{eq:28} and \eqref{eq:22} therefore gives that we only need $O(g(n) \log g(n)) = O(g(n) \log n)$ timesteps to compute $\Sf^\vareps(x_{1:n}b)$. Therefore, both $\Sf^\vareps(x_{1:n} 0)$ and $\Sf^\vareps(x_{1:n}1)$ are computable in polynomial time.
\end{proof}

Note that the above proof easily generalises to the case where $f$ is not a polynomial.

\begin{thm}[\emph{$S_{Kt}$ computable in exponential time on polynomial time computable sequence}]\label{thm:skt-poly-exp}
  If $x_{1:\infty}$ is computable in polynomial time, then $S_{Kt}^\vareps(x_{1:n}0)$ and $S_{Kt}^\vareps(x_{1:n}1)$ are computable in time $2^{n^{O(1)}}$.
\end{thm}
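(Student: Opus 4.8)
The plan is to mirror the argument for $\Sf$ in Theorem~\ref{thm:sfast-poly-poly}, but to feed it through the phase-counting estimate for $S_{Kt}$ from the proof of Theorem~\ref{thm:skt_comp_double_exp} rather than the one for $\Sf$. The crucial structural difference is that the tail of $S_{Kt}$ after $k$ \textsc{phase}s decays only like $2/(k+1)$ rather than like $2^{-k}$, so the number of phases we must run is governed by equation \eqref{eq:proofs3}, namely $k = \lfloor 2(\vareps S_{Kt}(x))^{-1}\rfloor$, which is inversely proportional to $S_{Kt}(x)$ instead of logarithmic in it. Everything therefore hinges on producing a good (polynomial) lower bound on $S_{Kt}(x_{1:n}b)$.

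First I would reuse the ``timer'' program construction from the proof of Theorem~\ref{thm:sfast-poly-poly}: given that $p^x$ prints $x_{1:\infty}$ in polynomial time $f(n)$, the program $p^b \langle n \rangle p^x$ runs $p^x$ until it has emitted $n$ symbols, then prints $b$ and halts, taking time $O(f(n)) + O(n\log n)$. This matters because it lets us lower-bound $S_{Kt}(x_{1:n}b)$ for \emph{both} values of $b$, including the ``wrong'' continuation that is not a prefix of $x_{1:\infty}$. Plugging this single program into the definition \eqref{eq:13} of $S_{Kt}$ (where, unlike $\Sf$, the program length enters with coefficient one) gives
\begin{equation*}
  S_{Kt}(x_{1:n}b) \geq \frac{2^{-|p^b \langle n \rangle p^x|}}{O(f(n)) + O(n \log n)} \geq \frac{1}{g(n)}
\end{equation*}
for some polynomial $g$, since $|\langle n \rangle| = 2\log n$ contributes only a factor $n^{-2}$ while $|p^b|$ and $|p^x|$ are constants.

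With this in hand I would substitute the bound into equation \eqref{eq:proofs3} to conclude that it suffices to run $k \leq \lfloor 2 g(n)/\vareps \rfloor = n^{O(1)}$ \textsc{phase}s of \textsc{fast}. Substituting this polynomial $k$ into the step-count identity \eqref{eq:22} then gives
\begin{equation*}
  \#\text{ steps} = 2^{k+1}(k-1) + 2 \leq 2^{n^{O(1)}},
\end{equation*}
since a polynomial in the exponent of $2$ yields exactly the claimed $2^{n^{O(1)}}$ bound. Finally I would verify that the bookkeeping overhead of the algorithm---simulating the programs, comparing each length-$(n+1)$ output against $x_{1:n}b$, and accumulating the $2^{-\Ktc}$ contributions to the required precision---adds only a factor polynomial in $n$ and in the step-count, and so does not escape $2^{n^{O(1)}}$.

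The main obstacle I anticipate is precisely the step that separates this theorem from Theorem~\ref{thm:sfast-poly-poly}: because $k$ for $S_{Kt}$ is \emph{inversely} proportional to $S_{Kt}(x_{1:n}b)$ rather than logarithmic, a merely polynomial lower bound on $S_{Kt}(x_{1:n}b)$ already forces $k$ to be polynomial (not logarithmic), and hence $2^k$ to be genuinely exponential. So the real work lies in establishing the $1/g(n)$ lower bound uniformly in $b$ via the timer construction; once that is secured, the exponential-time conclusion follows mechanically from \eqref{eq:proofs3} and \eqref{eq:22}, and the inverse-linear decay leaves no slack to improve the bound to polynomial.
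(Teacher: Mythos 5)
Your proposal is correct and follows essentially the same route as the paper's proof: both lower-bound $S_{Kt}(x_{1:n}b)$ via the timer program $p^b \langle n \rangle p^x$ (noting the coefficient-one dependence on program length, so $\langle n \rangle$ costs only $n^{-2}$), substitute the resulting $1/g(n)$ bound into equation \eqref{eq:proofs3} to get a polynomial number of \textsc{phase}s, and then apply \eqref{eq:22} to obtain the $2^{n^{O(1)}}$ step count. Your explicit identification of why the bound cannot be improved to polynomial---the inverse-linear rather than logarithmic dependence of $k$ on $S_{Kt}(x)$---is a point the paper leaves implicit, but the argument itself is the same.
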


\begin{proof}
    The proof is almost identical to the proof of Theorem \ref{thm:sfast-poly-poly}: supposing that $p^x$ prints $x_{1:n}$ in time $f(n)$ for some polynomial $f$, we have
  \begin{equation*}
    S_{Kt}(x_{1:n}) \geq \frac{2^{-|p^x|}}{f(n)}
  \end{equation*}
  The difference is that we substitute this into equation \eqref{eq:proofs3}, getting
  \begin{equation*}
    k \leq \left\lfloor 2^{|p^x| + 1}f(n)/\vareps\right\rfloor
  \end{equation*}
  and substitution into equation \eqref{eq:22} now gives
  \begin{align*}
    \# \text{ steps} &\leq 2^{2^{|p^x| + 1} f(n)/\vareps}\left( 2^{|p^x| + 1} f(n)/ \vareps - 1 \right) + 2\\
    &= 2^{O(f(n))}
  \end{align*}
  The other difference is that when we bound $S_{Kt}(x_{1:n}b) \geq 1/g(n)$, the degree of $g$ is only 2 greater than that of the degree of $f$. Therefore, we can compute $S_{Kt}^\vareps(x_{1:n}0)$ and $S_{Kt}^\vareps(x_{1:n}1)$ in time $2^{n^{O(1)}}$.
\end{proof}

Note that Theorem \ref{thm:sfast-poly-poly} does not contradict Theorem \ref{thm:sf-not-poly-comp}, which merely states that there exists a sequence for which $\Sf$ is not computable in polynomial time, and does not assert that $\Sf$ must be computable in superpolynomial time for every sequence.


\section{Discussion}
\label{sec:discussion}

In this paper, we have shown for the first time a loss bound on prediction based on a speed prior. This was proved for $S_{Kt}$, and we suspect that the result for stochastic sequences is not true for $\Sf$, due to weaker bounds on its KL divergence with the true environment. However, in the special case of deterministic sequences, we show that $\Sf$ has the same performance as $S_{Kt}$. We have also, again for the first time, investigated the efficiency of computing speed priors. This offers both encouraging and discouraging news: $S_{Kt}$ is good at prediction in certain environments, but is not efficiently computable, even in the restricted class of environments where it succeeds at prediction. On the other hand, $\Sf$ is efficiently computable for certain inputs, and succeeds at predicting those sequences, but we have no evidence that it succeeds at prediction in the more general case of stochastic sequences.

To illustrate the appeal of speed-prior based inference, it is useful to contrast with a similar approach introduced by \cite{vovk1989prediction}. This approach aims to predict certain simple measures: if $\alpha$ and $\gamma$ are functions $\N \rightarrow \N$, then a measure $\nu$ is said to be $(\alpha, \gamma)$-simple if there exists some `program' $\pi^\nu \in \B^\infty$ such that the UTM with input $x$ outputs $\nu(x)$ in time $\leq \gamma(|x|)$ by reading only $\alpha(|x|)$ bits of $\pi^\nu$. Vovk proves that if $\alpha$ is logarithmic and $\gamma$ is polynomial, and if both $\alpha$ and $\gamma$ are computable in polynomial time, then there exists a measure $\mu_{\alpha, \gamma}$ which is computable in polynomial time that predicts sequences drawn from $(\alpha, \gamma)$-simple measures.

$S_{Kt}$ and $\mu_{\alpha, \gamma}$ are similar in spirit, in that they predict measures that are easy to compute.
However, the contrast between the two is instructive: $\mu_{\alpha, \gamma}$ requires one to fix $\alpha$ and $\gamma$ in advance, and only succeeds on $(\alpha, \gamma)$-simple measures. Therefore, there are many polynomials $\gamma' > \gamma$ such that $\mu_{\alpha, \gamma}$ cannot predict $(\alpha, \gamma')$-simple measures. We are therefore required to make an arbitrary choice of parameters at the start and are limited by that choice of parameters. In contrast, $S_{Kt}$ predicts all measures estimable in polynomial time, and does not require some polynomial to be fixed beforehand. $S_{Kt}$-based prediction therefore is more general than that of $\mu_{\alpha, \gamma}$.

Further questions remain to be studied. In particular, we do not know whether the loss bounds on speed-prior-based predictors can be improved. We also do not know how to tighten the gap between the lower and upper complexity bounds on the speed priors.

It would also be interesting to generalise the definition of $S_{Kt}$. Our performance result was due to the fact that for all measures $\mu$ estimable in polynomial time, $S_{Kt}(x) \geq \mu(x)/(f(|x|, - \log \mu(x)))$, where $f$ was a polynomial. Now, if $\mu$ is estimable in polynomial time by $\nu$, then the denominator of the fraction $\nu(x)$ must be small enough to be printed in polynomial time. This gives an exponential bound on $1/\nu(x)$, and therefore a polynomial bound on $- \log \mu(x)$. We therefore have that $S_{Kt}(x) \geq \mu(x)/g(|x|)$ for a polynomial $g$. Because $g$ is subexponential, this guarantees that $S_{Kt}$ converges to $\mu$~\citep{ryabko2008predicting}.\footnote{To see that $g$ must be subexponential for good predictive results, note that for all measures $\mu$, $\lambda(x) \geq \mu(x)/2^{|x|}$, but $\lambda$ does not predict well.} This suggests a generalisation of $S_{Kt}$ that takes a mixture over some class of measures, each measure discounted by its computation time. Loss bounds can be shown in the same manner as in this paper if the measures are computable in polynomial time, but the question of the computational complexity of this mixture remains completely open.

\subsubsection*{Acknowledgements}

The authors would like to thank the reviewers for this paper, the Machine Intelligence Research Institute for funding a workshop on Schmidhuber's speed prior, which introduced the first author to the concept, and Mayank Daswani and Tom Everitt for valuable discussions of the material. This work was in part supported by ARC grant DP150104590.

\bibliography{bibliography}

\end{document}